\documentclass[conference]{IEEEtran}
\usepackage{times}

\usepackage[numbers]{natbib}
\usepackage{multicol}
\usepackage[bookmarks=true]{hyperref}

\usepackage[utf8]{inputenc} 
\usepackage[T1]{fontenc}    
\usepackage{hyperref}       
\usepackage{url}            
\usepackage{booktabs}       
\usepackage{amsfonts}       
\usepackage{nicefrac}       
\usepackage{microtype}      
\usepackage{subcaption}     

\usepackage{color}
\usepackage{amsmath, amsthm, amsfonts}
\usepackage{appendix}

\usepackage{caption}
\captionsetup[figure]{font=small}

\newcommand{\R}{\mathbb{R}}

\newcommand{\I}{\mathbf{I}}

\newtheorem{thm}{Theorem} 
\newtheorem{prop}[thm]{Proposition}

\newcommand{\states}{S}
\newcommand{\actions}{A}
\newcommand{\state}{{s}}
\newcommand{\action}{{a}}
\newcommand{\newState}{{s'}}
\newcommand{\rewardFn}{r}
\newcommand{\transProb}{P_{\state,\action}}
\newcommand{\transProbs}{\{P_{\state,\action}\}}
\newcommand{\rewardParam}{\theta}
\newcommand{\rewardParams}{\Theta}
\newcommand{\prior}{p}

\newcommand{\planner}{d} 
\newcommand{\plannerParamed}[1]{\planner(#1)}
\newcommand{\policy}{\pi}
\newcommand{\policySpace}{\Pi}

\newcommand{\demonstrator}{human}

\newcommand{\inferer}{robot}

\newcommand{\controls}{a}

\newboolean{include-notes}
\setboolean{include-notes}{true}
\newcommand{\adnote}[1]{\ifthenelse{\boolean{include-notes}}
 {{\color{red}AD: #1}}{}}
\newcommand{\lcnote}[1]{\ifthenelse{\boolean{include-notes}}
 {{\color{green}LC: #1}}{}}
\newcommand{\acnote}[1]{\ifthenelse{\boolean{include-notes}}
 {{\color{blue}AC: #1}}{}}
 \newcommand{\dsnote}[1]{\ifthenelse{\boolean{include-notes}}
 {{\color{purple}DS: #1}}{}}
 
\usepackage{graphicx}       

\newcommand{\figref}[1]{Fig. \ref{#1}}
\newcommand{\secref}[1]{Sec. \ref{#1}}

\title{Human irrationality: both bad\\ and good for reward inference}

\pdfinfo{
   /Author (Lawrence Chan, Andrew Critch, Anca Dragan)
   /Title  (Human irrationality: both bad and good for reward inference)
}

\begin{document}


\author{\authorblockN{Lawrence Chan}
\authorblockA{University of California, Berkeley\\
chanlaw@berkeley.edu}
\and
\authorblockN{Andrew Critch}
\authorblockA{University of California, Berkeley\\
critch@berkeley.edu}
\and
\authorblockN{Anca Dragan}
\authorblockA{University of California, Berkeley\\
anca@berkeley.edu}}


%

\maketitle
\begin{abstract}
Assuming humans are (approximately) rational enables robots to infer reward functions by observing human behavior. But people exhibit a wide array of irrationalities, and our goal with this work is to better understand the effect they can have on reward inference. The challenge with studying this effect is that there are many types of irrationality, with varying degrees of mathematical formalization. We thus operationalize irrationality in the language of MDPs, by altering the Bellman optimality equation, and use this framework to study how these alterations would affect inference.  

We find that wrongly modeling a systematically \emph{irrational} human as noisy-\emph{rational} performs a lot worse than correctly capturing these biases -- so much so that it can be better to skip inference altogether and stick to the prior! More importantly, we show that an \emph{irrational} human, when \emph{correctly} modelled, can communicate \emph{more} information about the reward than a \emph{perfectly rational} human can. That is, if a robot has the correct model of a human's irrationality, it can make an even \emph{stronger} inference than it ever could if the human were rational. Irrationality fundamentally helps rather than hinder reward inference, but it needs to be correctly accounted for.
We explain this theoretically by showing that irrationality can increase the mutual information between the policy and the reward. We also replicate the result in an autonomous driving domain with myopic/greedy demonstrations.

\end{abstract}

\section{Introduction}
\label{intro}

Motivated by difficulty in reward specification \citep{lehman2018surprising}, inverse reinforcement learning (IRL) methods estimate a reward function from human demonstrations
\citep{ng2000algorithms,abbeel2004apprenticeship,kalman1964linear,jameson1973inverse,mombaur2010human}. 
The central assumption behind these methods is that human behavior is {rational, i.e., optimal with respect to their reward (cumulative, in expectation)}. 
Unfortunately, decades of research in behavioral economics and cognitive science \citep{Chipman2014Cognitive} have unearthed a deluge of \emph{irrationalities}, i.e., of ways in which people deviate from optimal decision making: hyperbolic discounting, scope insensitivity, illusion of control, decision noise, loss aversion, to name a few. 
As a community, we are starting to account for some of these irrationalities in different ways \citep{ziebart2008maximum, ziebart2010modeling, singh2017risk, reddy2018you, evans2016learning,shah2019feasibility}. In this work, we seek to understand how the \emph{"ground truth"} human irrationality (type and amount) pairs with the robot's \emph{model} of the human to affect the \emph{performance} of reward inference. 

On the human \emph{model} side, prior work has shown that a more accurate model of the ground-truth human behavior will lead to better reward learning performance. However, what we still do not understand is: how big is the difference between reward learning performance for an irrational human when the human is correctly modelled (as irrational), as opposed to incorrectly modelled (as rational), for different types of irrationality? A very large difference would mean we should keep pushing towards better human models.

On the \emph{ground truth} human side, we may intuit that a perfectly rational human would be the ideal \demonstrator{} for reward inference and that a robot using a rational model will perform the best reward inference, i.e. that the (rational human, rational model) pair is ideal. Even if this proves true, it still raises questions about how other human-model pairs fare. For example, just how hard does human irrationality make reward learning, assuming the robot accurately models the human? While this might seem hypothetical because we cannot dictate what kind of human interacts with the robot, the answer is nonetheless important in order to understand the fundamental limits of reward learning. Namely, what are the effects of irrationality on the upper bounds on performance, if the \inferer{} correctly models the \demonstrator{}? If some irrationality types lead to inherently ambiguous behavior, such that no \inferer{} could perform well, this is something the reward learning community needs to be aware of.

\begin{figure}[t]
    \centering

    \includegraphics[width=1\columnwidth]{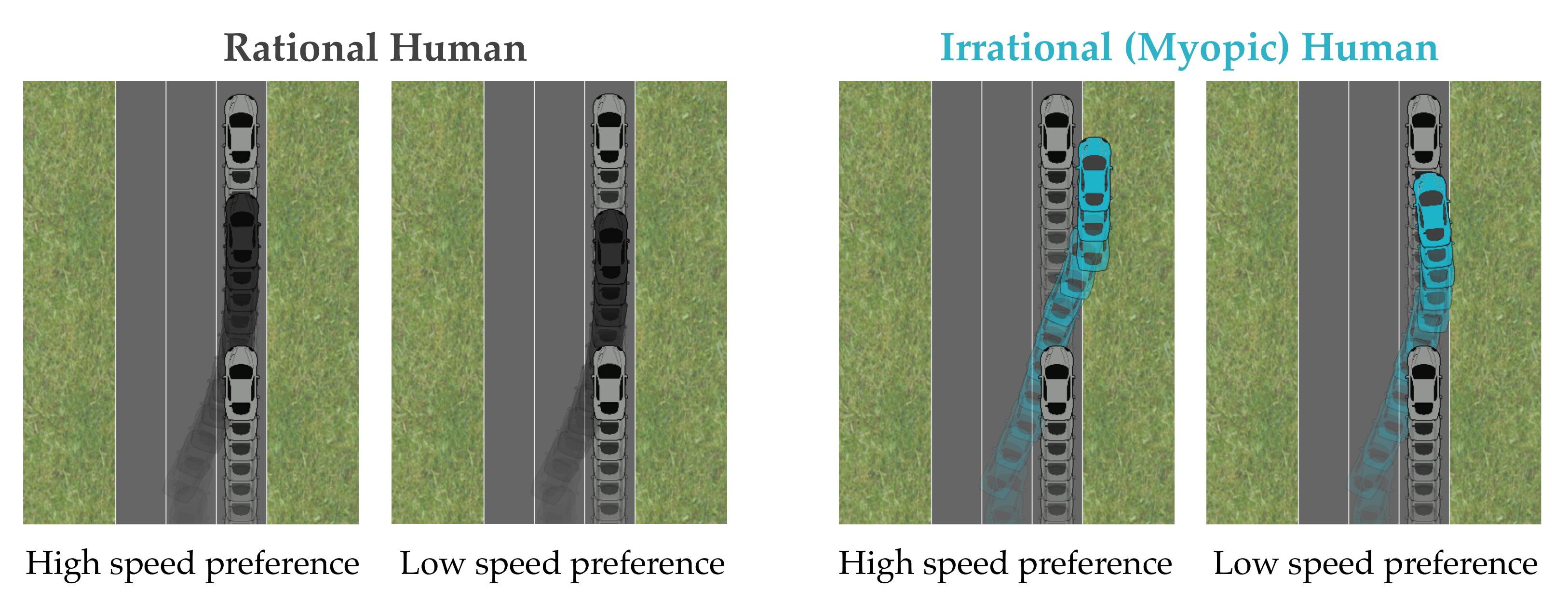}
    \caption{On the left, the rational human (dark gray) behaves in the same way regardless of their speed preference. On the right, the myopic (short horizon) human (cyan) behaves \emph{differently} depending on their speed preference: merging and staying in lane vs. myopically starting to pass on the right to make more progress. Myopia induces this diversity of behavior as a function of the human's reward, increasing the amount of information in the behavior. This makes it fundamentally easier for a \inferer{} to differentiate what the reward is. }
    \label{fig:front}
\end{figure}

One challenge with conducting such an analysis is that there are many irrationalities in the psychology and behavioral economics literature, with varying degrees of mathematical formalization versus empirical description. To structure the space for our analysis, we operationalize irrationalities in the language of MDPs by systematically enumerating possible deviations from the Bellman equation -- imperfect maximization, deviations from the true transition function, etc. This gives us a formal framework in which we can simulate irrational behavior, run reward inference, and study its performance. 
Armed with this formalism, we then explore the various impacts of irrationality on reward learning in three families of environments: small random MDPs, a more legible gridworld MDP, and an autonomous driving domain drawn from the literature \citep{sadigh2016planning}. 

\subsection{Irrationality can make human behavior \emph{more} informative.} Our most surprising finding comes from analyzing the question of how "ground truth" human irrationality affects reward inference, when the correct model is known to the robot. We expected that perfectly rational behavior is more informative than suboptimal, biased behavior. Instead, we found that certain irrationalities actually \emph{improve} the quality of reward inference. When the \demonstrator{} exhibits certain biases, correctly modeling these biases can help make stronger inferences -- not only compared to incorrect modeling, but even compared to the best-case performance when learning from a perfectly rational \demonstrator{}! That is, in all three environments: an irrational \demonstrator{} correctly modelled as irrational by the \inferer{} leads to better reward inference than a rational \demonstrator{} correctly modelled as rational,
for some types and amounts of irrationality.


We explain this theoretically from the perspective of the \emph{mutual information} between the \demonstrator{} behavior and the reward parameters, proving that some irrationalities are arbitrarily more informative than rational behavior. An intuitive example is in \figref{fig:front}, where myopic behavior varies with the reward, but rational behavior does not -- this makes it fundamentally possible for the \inferer{} to figure out the reward from the myopic behavior, but not from the rational one. We prove that there are cases where the rational behavior is not informative at all, whereas some irrational behavior achieves the information-theoretic upper bound; we also prove that the popular Boltzmann-rationality model is arbitrarily more informative than perfect rationality. 
While knowing the human irrationality model is hard, this result is overall very encouraging: it is probably easier to learn irrationality models of people, than to make people act rationally -- so in the end, it is a good thing that irrational behavior is the more informative one. 

\begin{figure*}
    \centering
    \vspace{-0.5cm}
    \includegraphics[width=0.6\textwidth]{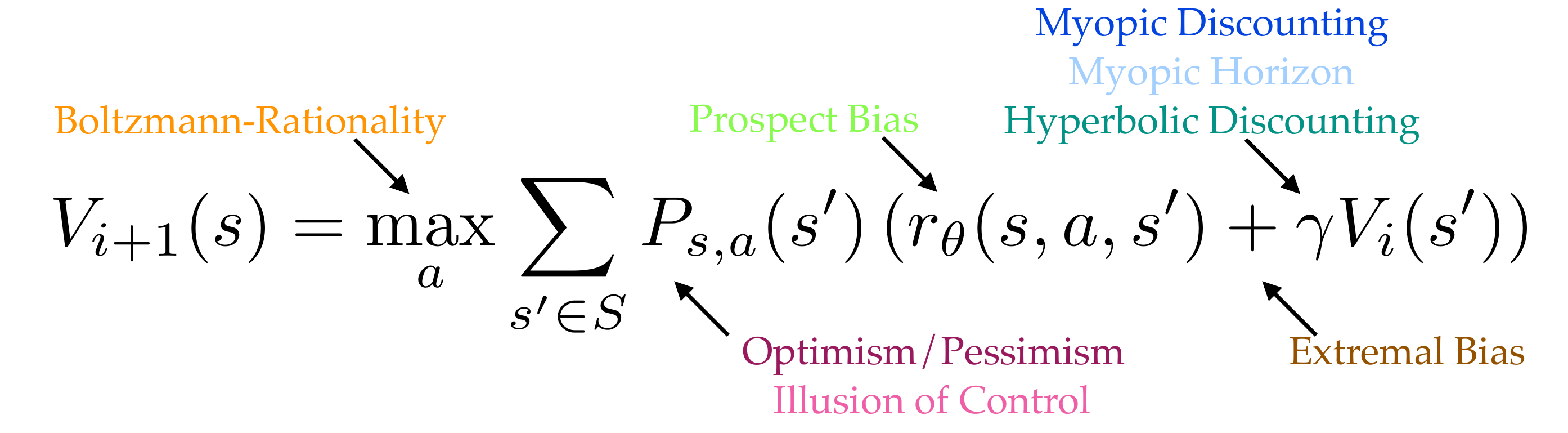}
    \caption{In \secref{irrationality-types}, we modify the components of the Bellman update to cover different types of irrationalities: changing the max into a softmax to capture noise (Boltzmann), changing the transition function to capture optimism/pessimism or the illusion of control, changing the reward values to capture the nonlinear perception of gains and losses (Prospect), changing the average reward over time into a maximum (Extremal), and changing the discounting to capture more myopic decision-making. }
    \label{fig:biases}
\end{figure*}

\subsection{Unmodelled irrationality leads to remarkably poor reward inference.} We continue with an analysis of what the \emph{model} that the robot uses should be, if the human is in fact irrational.
It might seem that we can't immediately benefit from the knowledge that irrationalities help inference unless we have a comprehensive understanding of human decision-making, and so we should just stick to the status quo of modeling people as rational. However, prior work \cite{shah2019feasibility, singh2017risk} has already shown benefits of algorithms that approximately capture irrationality. What we contribute is the finding that modeling irrational \demonstrator{}s as (noisily-)rational (instead of accounting for their biases) can lead to worse outcomes than not performing inference at all and just using the prior (\secref{misspec-effects}). Encouragingly, we also find evidence that even just modeling the \demonstrator{}'s irrationality only  \emph{approximately}, i.e. getting the type of irrationality correct but not the exact parameters, allows a \inferer{} to outperform modeling the \demonstrator{} as noisily-rational. This reinforces prior works that studied specific algorithms for capturing specific biases with a more systematic analysis (\secref{additional-misspec}). 


\subsection{Contributions}

Overall, we contribute 1) a theoretical and empirical analysis of the effects of different irrationalities on reward inference,  2) a way to systematically formalize and cover the space of irrationalities in order to conduct such an analysis, and 3) evidence and explanation for the importance and benefit of accounting for irrationality  during inference. The implications are that we should probably stop assuming rationality (because it can lead to worse inference than sticking to the prior), and further prior work on learning models of irrationality (because there is a lot of information to be exploited from irrational behavior). While basic intuition says that it is good to model irrationality because people are irrational, what we argue in this paper is that there is an additional reason for modeling irrationality: the way people are irrational communicates additional information about the reward, and correct modeling will enable robots to tap into that.

\section{Framework: Biases as deviations from the Bellman update}
\label{framework}
\subsection{Exploring biases through simulation}
While ideally we would recruit human subjects with different irrationalities and measure how well we can learn rewards, this is prohibitive because we do not get to dictate someone's irrationality type: people exhibit a mix of them, some yet to be discovered. Further, measuring the accuracy of inference from observing real humans is complicated by the fact that we lack ground truth access to the human's reward function.  For instance, suppose we asked subjects to produce a set of (behavior, reward function) pairs. We could then try to predict the reward functions from the behaviors.  But how did we, the experimenters, infer the reward functions from the people? If we are wrong in our assumptions about which irrationalities are affecting their behavior and/or explicit reports of rewards, we would remain deluded about the subjects' true preferences.

To address these issues, we \emph{simulate} \demonstrator{} behavior subject to different irrationalities, run reward inference, and measure the performance against the ground truth, i.e., the accuracy of a Bayesian posterior on the reward parameter given the (simulated) \demonstrator{}'s inputs.
\subsection{Background and formalism}
Consider an Uncertain-Reward MDP (URMDP) \citep{bagnell2001solving, regan2011robust, desai2017uncertain} $\mathcal M = (\states, \actions, \transProbs, \gamma, \rewardParams, \prior,  \rewardFn )$, consisting of finite state and action sets $\states$ and $\actions$, distributions over states $\transProbs$ representing the result of taking action $a$ in state $s$, discount rate $\gamma \in [0, 1)$, a (finite) set of reward parameters $\rewardParams$, a prior distribution $\prior \in \Delta(\states \times \rewardParams)$ over starting states and reward parameters, and a parameterized state-action reward function $\rewardFn: \rewardParams \times \states \times \actions \times \states \rightarrow \R$, where $\rewardFn_{\rewardParam}(\state, \action, \newState)$ represents the reward received. 

We assume that the \demonstrator's policy $\policy$ satisfies $\policy = \plannerParamed{\theta}$, where $\planner$ is an (environment-specific) \textbf{planner} $\planner: \rewardParams \rightarrow \policySpace$ that returns a (possibly stochastic) policy given a particular reward parameter $\theta$. The \textbf{rational} \demonstrator{} uses a planner $\planner_{\textrm{Rational}}$ that, given a reward parameter $\theta$, returns a policy that maximizes its expected value. On the other hand, we say that an \demonstrator{} is \textbf{irrational} if its planner returns policies with lower expected value than the optimal policy, for at least one $\rewardParam \in \rewardParams$.

\subsection{Types and degrees of irrationality}\label{irrationality-types}

There are many possible irrationalities that people exhibit \citep{Chipman2014Cognitive}, far more than what we could study in one paper. 
To provide good coverage of this space, we start from the Bellman update, and systematically manipulate its terms and operators to produce a variety of different irrationalities that deviate from the optimal MDP policy in complementary ways. For instance, operating on the discount factor can model myopic behavior, while operating on the transition function can model optimism or the illusion of control. We parametrize each irrationality so that we can manipulate its ``intensity" or deviation from rationality. Figure \ref{fig:biases} summarizes our approach, which we detail below.

\subsubsection{Rational \demonstrator{}}
In our setup, the \textbf{rational} \demonstrator{} does value iteration using the Bellman update from \figref{fig:biases} with ground truth models and high $\gamma$. Our models change this update to produce different types of non-rational behavior.

\subsubsection{Modifying the max operator: Boltzmann}
 \noindent\textbf{Boltzmann}-rationality modifies the maximum over actions $\max_{a}$ with a Boltzmann operator with a parameter $\beta$:
\[V_{i+1}(s) = \textrm{Boltz}^\beta_{a}\sum_{s' \in S}  \transProb{(s')}\left(r_\theta(s,a,s') + \gamma  V_{i}(s') \right),\]
where $\textrm{Boltz}^\beta(\mathbf{x}) = {\sum_{i} x_i e^{\beta x_i}}/{\sum_i e^{\beta x_i}} $ \citep{ziebart2010modeling, asadi2017alternative}. 
This is the most popular stochastic model used in reward inference\citep{ziebart2010modeling, asadi2017alternative, fisac2017pragmatic}. After computing the value function, the Boltzmann-rational planner $d_{\textrm{Boltz}}$ returns a policy where the probability of an action is proportional to the exponential of the $Q$-value of the action: 
\[\pi(a|s) \propto e^{\beta Q_\theta(s,a)}.\]
The constant $\beta$ is called the \emph{rationality constant}, because as $\beta \to \infty$, the human choices approach perfect rationality (optimality), whereas $\beta=0$ produces uniformly random choices.

\subsubsection{Modifying the transition function}
Our next set of irrationalities manipulate the transition function away from reality.
\paragraph{Illusion of Control.}
People often overestimate their ability to control random events \citep{thompson1999illusions}. To model this, we consider \demonstrator{}s that use the Bellman update:
\[V_{i+1}(s) = \max_{a}\sum_{s' \in S}   \transProb^n{(s')} \left ( r_\theta(s,a,s') + \gamma  V_{i}(s')\right)\]
where $ \transProb^n{(s')} =\left (\transProb{(s')} \right)^n / \sum_{s'' \in S}\left (\transProb{(s'')} \right)^n$.  As $n\rightarrow \infty$, the \demonstrator{} acts as if it exists in a deterministic environment. As $n \rightarrow 0$, the \demonstrator{} acts as if it had an equal chance of transitioning to every possible successor state. 
\paragraph{Optimism/Pessimism}
Many people systematically overestimate or underestimate their chance experiencing of positive over negative events \citep{sharot2007neural}. We model this using \demonstrator{}s that modify the probability they get outcomes based on the value of those outcomes:
\[V_{i+1}(s) = \max_{a}\sum_{s' \in S}   \transProb^\omega{(s')}\left ( r_\theta(s,a,s') + \gamma  V_{i}(s')\right)\]
where $ \transProb^\omega{(s')} \propto  \transProb{(s')} e^{ \omega \left ( r_\theta(s,a,s') + \gamma  V_{i}(s')\right)}$. 
$\omega$ controls how pessimistic or optimistic the \demonstrator{} is. As $\omega \rightarrow +\infty$ (respectively, $\omega \rightarrow -\infty$), the \demonstrator{} becomes increasingly certain that good (bad) transitions will happen. As $\omega \rightarrow 0$, the \demonstrator{} approaches the rational \demonstrator{}.

\subsubsection{Modifying the reward}
Next, we consider \demonstrator{}s that use the modified Bellman update:
\[V_{i+1}(s) = \max_{a}\sum_{s' \in S}  \transProb{(s')}\left ( f(r_\theta(s,a,s')) + \gamma  V_{i}(s')\right)\]
where $f: \R \rightarrow \R$ is some scalar function. This is equivalent to solving the MDP with reward $f\circ r_\theta$, and allows us to model human behavior such as loss aversion and scope insensitivity. 

\paragraph{Prospect Bias} \citet{kahneman2013prospect} inspires us to consider a particular family of $f$s:
\[f_c(r) = \begin{cases}
\log(1+|r|) & r > 0\\
0 & r = 0 \\
-c \log(1+|r|) & r < 0
\end{cases}\]
$c$ controls how loss averse the \demonstrator{} is. As $c \rightarrow \infty$, the \demonstrator{} primarily focuses on avoiding negative rewards. As $c \rightarrow 0$, the \demonstrator{} focuses on maximizing positive rewards. 

\subsubsection{Modifying the sum between reward and future value:}
\paragraph{Extremal} People seem to exhibit duration neglect, sometimes only caring about the maximum intensity of an experience \citep{do2008evaluations}. We model this using Bellman update: 
 \begin{align*}
    \hspace*{-4pt} V_{i+1}(s) = \max_{a}\sum_{s' \in S}  \transProb{(s')} \max \begin{cases}
    r_\theta(s,a,s') \\
    (1-\alpha)  r_\theta(s,a,s')+ \alpha  V_{i}(s')
    \end{cases}
\end{align*}
These \demonstrator{}s maximize the expected maximum reward along a trajectory, instead of the expected sum of rewards. As $\alpha \rightarrow 1$, the \demonstrator{} maximizes the expected maximum reward received along the full trajectory. As $\alpha \rightarrow 0$, the \demonstrator{} becomes greedy and maximizes the immediate reward.

\subsubsection{Modifying the discounting}
\paragraph{Myopic Discount}
In practice, people are sometimes myopic, only considering near-term rewards. One way to model this is to decrease gamma in the Bellman update. At $\gamma = \gamma^*$, the discount rate specified by the environment, this is the rational \demonstrator{}. As $\gamma \rightarrow 0$, the \demonstrator{} becomes greedy and only acts to maximize immediate reward. 

\paragraph{Myopic Value Iteration}
As another way to model human myopia, we consider a \demonstrator{} that performs only $h$ steps of Bellman updates. That is, this \demonstrator{} cares equally about rewards for a horizon $h$, and discount to 0 reward after that. As $h \rightarrow \infty$, this \demonstrator{} becomes rational. If $h=1$, this \demonstrator{} only cares about the immediate reward.

\paragraph{Hyperbolic Discounting}
Human also exhibit hyperbolic discounting, with a high discount rate for the immediate future and a low discount rate for the far future \citep{grune2015models}. \citet{alexander2010hyperbolically} formulate this as the following Bellman update:
\[V_{i+1}(s) = \max_{a}\sum_{s' \in S} \transProb{(s')} \frac{r_\theta(s,a,s') + V_{i}(s') }{1 + k V_{i}(s')}\]
$k$ modulates how much the \demonstrator{} prefers rewards now versus the future. As $k \rightarrow 0$, this \demonstrator{} becomes a rational \demonstrator{} without discounting.

\section{Exploring the effects of known biases on reward inference}
\label{known-bias-effects}
Armed with our framework for characterizing irrationality, we test its implications for reward inference. We start by investigating the effects in random MDPs.

\subsection{Experimental Design: Exact Inference in Random MDPs.}
\label{random-mdp-setup}
\paragraph{Independent Variables.} We manipulate the type of the planner, and vary the degree parameters for each. We use different environments, sample different ground truth reward parameters, and test different trajectory lengths for the demonstrated behavior ($T=3$, $15$, and $30$ state-actions).

\paragraph{Dependent Measures.} 
To separate the inference difficulty caused by suboptimal inference from the difficulty caused by \demonstrator{} irrationality, we perform the exact Bayesian update on the trajectory $\xi$ \citep{ramachandran2007bayesian}, which gives us the posterior on $\theta$ given $\xi$,
$P(\theta|\xi) = \frac{P(\xi|\theta)P(\theta)}{\int_{\theta'} P(\xi|\theta')P(\theta')}$.
Our primary metric is the expected \textbf{log loss} of this posterior:
\begin{align*}
    \textrm{Log Loss}(\theta |\xi) = E_{{\theta^*}, \xi \sim \plannerParamed{\rewardParam^*}}\left[-\log P(\theta^*|\xi)\right].
\end{align*}
A low log loss implies that we are assigning a high likelihood to the true $\theta^*$. Note that in this case, the log loss is equal to the entropy of the posterior $H(\theta|\xi) = H(\theta) - \I(\theta; \xi)$. 

 For each environment and irrationality type, we calculate the performance of reward inference on trajectories of a fixed length $T$. To sample a trajectory of length $T$ from a \demonstrator{}, we fix $\theta^*$ and start state $s$. Then, we generate rollouts starting from state $s$ until $T$ state, action pairs have been sampled from $\pi = \plannerParamed{\rewardParam^*}$. We repeat this procedure 10 times for each start state. 

\begin{figure*}[t!]
    \centering
    \vspace{-0.5cm}
    \includegraphics[width=0.85\textwidth]{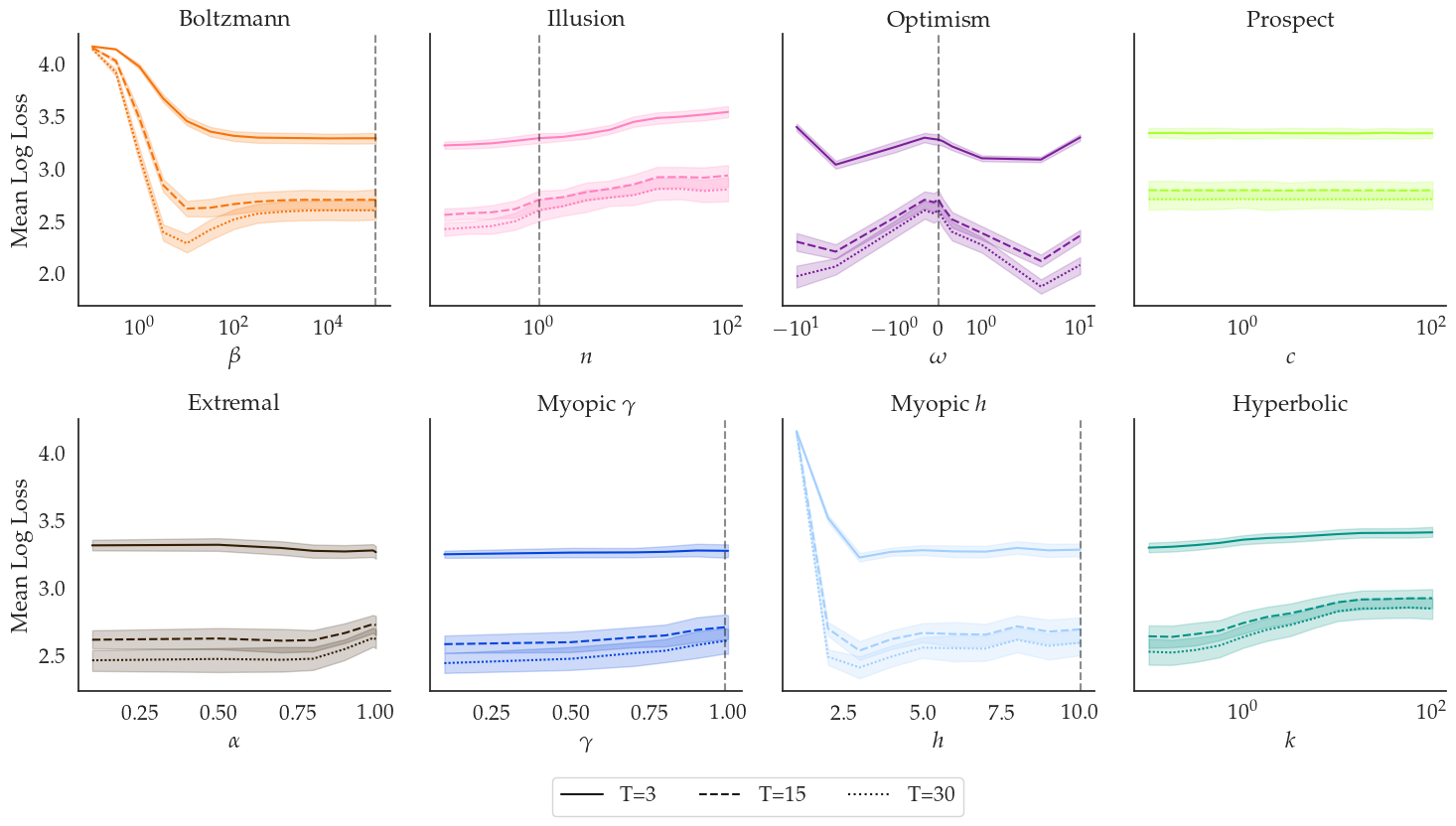}
    \caption{The log loss (lower = better) of the posterior as a function of the parameter we vary for each irrationality type, on the random MDP environments. For the irrationalities that interpolate to the rational planner, we denote the value that is closest to rational using a dashed vertical line. Every irrationality except Prospect Bias all have parameter settings that outperform the rational planner. The error bars show the standard error of the mean, calculated by 1000 bootstraps across environments.}
    \label{fig:random_mdp_params_vs_log_loss}
\end{figure*}

\paragraph{Simulation Environment.}
We used MDPs with 10 states and 2 actions, where each $(s,a)$-pair has 2 random, nonzero transition-probabilities. We used $\gamma = 0.99$ and start trajectories in every state without reward. In these $\theta$ is a vector of length 3, where $\theta_i$ is the reward received from transitions out of state $i$ (and all other rewards are 0). We discretized each $\theta_i$ with 4 values, leading to $|\Theta| = 64$. We generated 20 such random MDPs. 

\begin{figure*}[t!]
    \centering
    \includegraphics[width=0.85\textwidth]{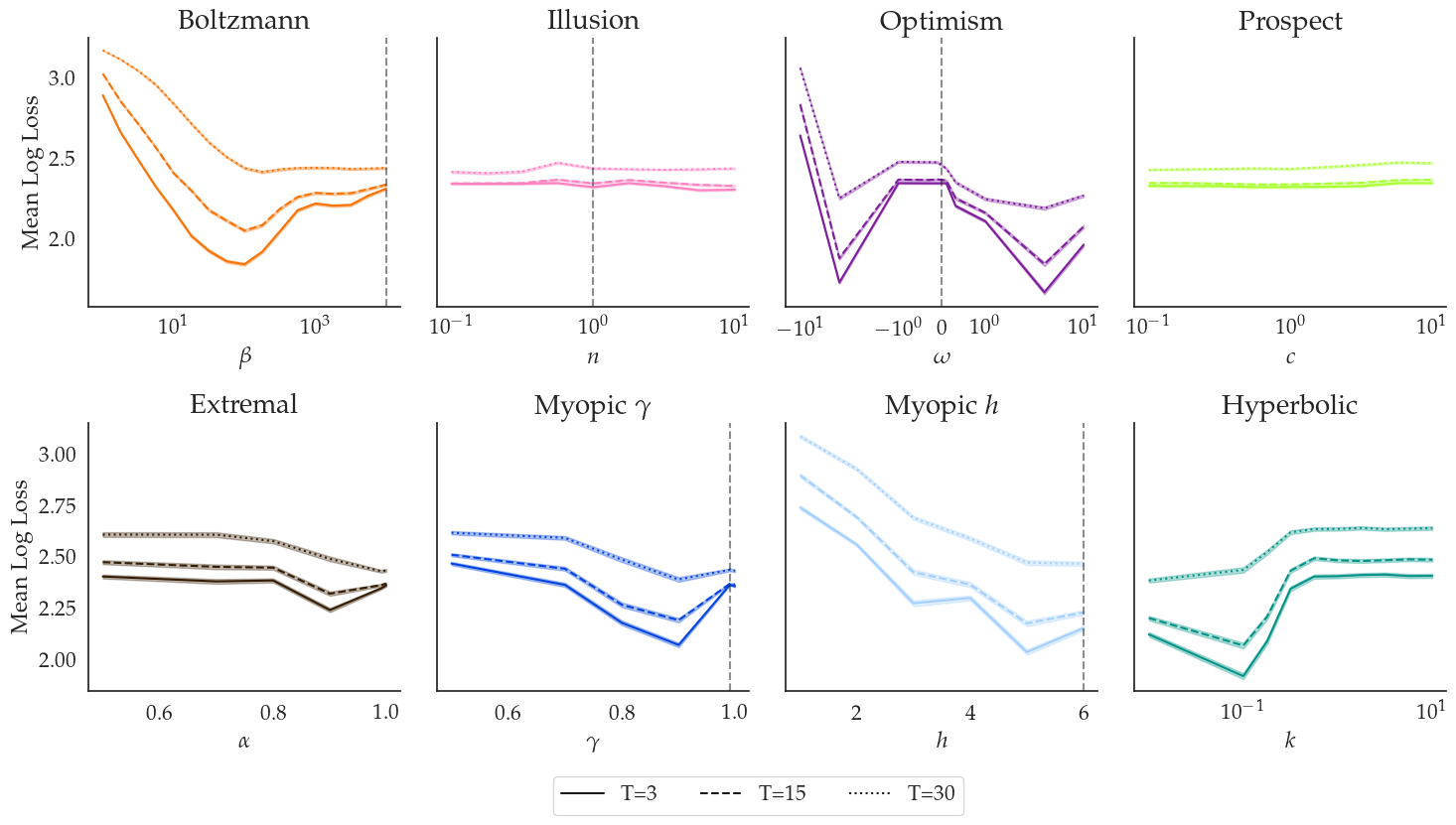}
    \caption{The analog of \figref{fig:random_mdp_params_vs_log_loss} for the gridworld. Error bars are the standard error of the mean. The findings are surprisingly similar as with the random MDPs. Note the more limited x-axis ranges we used in this experiment. } 
    \label{fig:gridworld_params_vs_log_loss}
\end{figure*}

\subsection{Results}
\label{exact-inference-results}
\figref{fig:random_mdp_params_vs_log_loss} plots the log loss for each irrationality for random MDPs. The degree affects reward inference, with many settings naturally resulting in worse inference, especially at the extremes. However, what is surprising is that \emph{every} type except Prospect Bias has at least one degree (parameter) setting that results in \emph{better} inference with enough data: we see that most irrationality types can be more informative than rational behavior. The more data we have, the more drastic the difference (T=30 results in both better inference and larger difference relative to rational). 

We put this to the test with a repeated-measures ANOVA with planner type as the independent variable, using the data from the best parameter setting from each type and T=30, and environment as a random effect. We find a significant main effect for planner type ($F(8, 806372) = 6102.93,\ p < 0.001$). A post-hoc analysis with Tukey HSD corrections confirmed that every irrationality except Prospect improved inference compared to the fully rational planner ($p<.001$). For $T=30$, Optimism with $\omega = 3.16$ performed best, followed by Illusion of Control with $n=0.1$ and Boltzmann with $\beta = 10$.

\begin{figure}
    \centering
    \includegraphics[width=0.8\columnwidth]{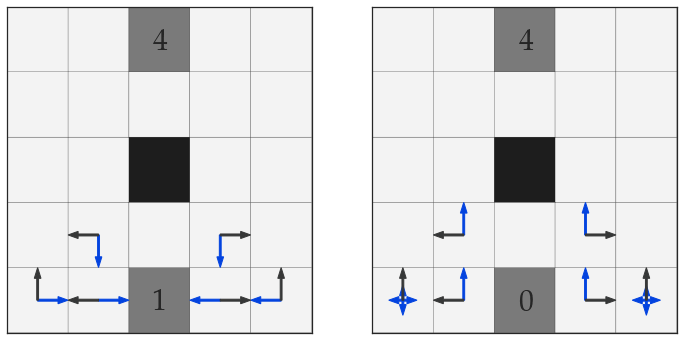}
    \caption{Myopic value iteration ($h=5$), here presented in blue) produces different policies for $\theta^*=(4,1)$ vs. $\theta^*=(4,0)$: while the rational expert (whose policy is represented in black) always detours around the hole and attempts to reach the larger reward, myopia causes the myopic expert to go for the smaller source of reward when it is non-zero.}
    \label{fig:myopic_vs_optimal}
\end{figure}
\begin{table*}[t!]
  \caption{Autonomous Driving Results: Merging}
  \label{tab:car-results}
  \centering
  \begin{tabular}{c cc cc }
    \toprule
    \multicolumn{1}{c}{~} & \multicolumn{2}{c}{Bayesian IRL} & \multicolumn{2}{c}{CIOC} \\
    \cmidrule(r){2-3}
    \cmidrule(r){4-5}
    Horizon & Log Loss & Information Gain & Cosine Similarity & Normalized Return \\
    \hline 
    3 & \textbf{0.690} & \textbf{0.696} & \textbf{0.999} & 0.939\\
    5 & 0.824 & 0.562 & 0.940 & \textbf{0.981}\\
    7 & 1.383 & 0.004 & 0.350 & 0.856\\
    \bottomrule
  \end{tabular}
\end{table*}

\section{Does this effect generalize?}
\label{generalization}
In the domain of random MDPs, we found that not only does irrationality affect reward inference, certain irrationalities can actually result in better inference. In this section, we probe the generality of this finding empirically, and explain it theoretically.

\subsection{Gridworld}
\label{gridworld}
Random MDPs lack structure, so we first test a toy environment that adds natural structure based on OpenAI Gym's `Frozen-Lake-v0' \citep{brockman2016openai} (\figref{fig:myopic_vs_optimal}). \figref{fig:gridworld_params_vs_log_loss} shows the results in the gridworld: they are surprisingly (and reassuringly) similar to the random MDPs. This suggests that irrationalities can indeed help inference across differently structured MDPs. By inspecting the policies, we see that the irrational \demonstrator{}s were able to outperform the rational \demonstrator{} by disambiguating between $\theta$s that the rational \demonstrator{} could not. To visualize this, we give an example of how the policy of several irrational \demonstrator{}s differ on the gridworld when the rational \demonstrator{}'s policies are identical in  \figref{fig:myopic_vs_optimal}. Additional examples are given in \figref{fig:optimism_vs_opt} and \figref{fig:boltzmann_vs_optimal} in the appendix.


\subsection{Autonomous Driving}
\label{cars}
Even if known irrationality helps reward inference both empirically and theoretically in small MDPs, the question still remains whether this result will matter in practice. Next, we investigate the effect of \demonstrator{} irrationality on reward inference in a continuous state and action self-driving car domain \citep{sadigh2016planning}. Switching to this real-world domain means we can no longer plan exactly, so we use model predictive control. It also means we can no longer run exact inference, so we approximate Bayesian IRL through samples, and also test continuous IOC (CIOC, \citet{levine2012continuous}) which recovers an approximate MLE. We measure the cosine similarity between the estimate and $\theta^*$, as well as the normalized reward of the trajectory when optimizing the recovered estimate. We use the merging task from \figref{fig:front} (more details in the appendix).

We report our results in Table \ref{tab:car-results}. As the columns relating to Bayesian IRL show, decreasing the MPC planning horizon significantly decreases the log loss of the Bayesian IRL posterior. As with the gridworld results in \secref{gridworld}, the reason for this is that the shorter planning horizons exhibit more diverse behavior (as a function of the reward). Of the 4 reward settings we used, MPC with horizon 3 and 5 produced two different qualitative behaviors, whereas all rewards led to qualitatively similar behavior with horizon 7 (depicted in \figref{fig:front}). When the weight on target speed was large enough, MPC with horizon 3 and 5 would produce a trajectory that overtakes the front car by going off the road. In all other cases, the \demonstrator{} would merge between the two other cars. 

\begin{figure*}
    \begin{subfigure}{0.65\textwidth}
    \centering
    \includegraphics[width=1.0\textwidth]{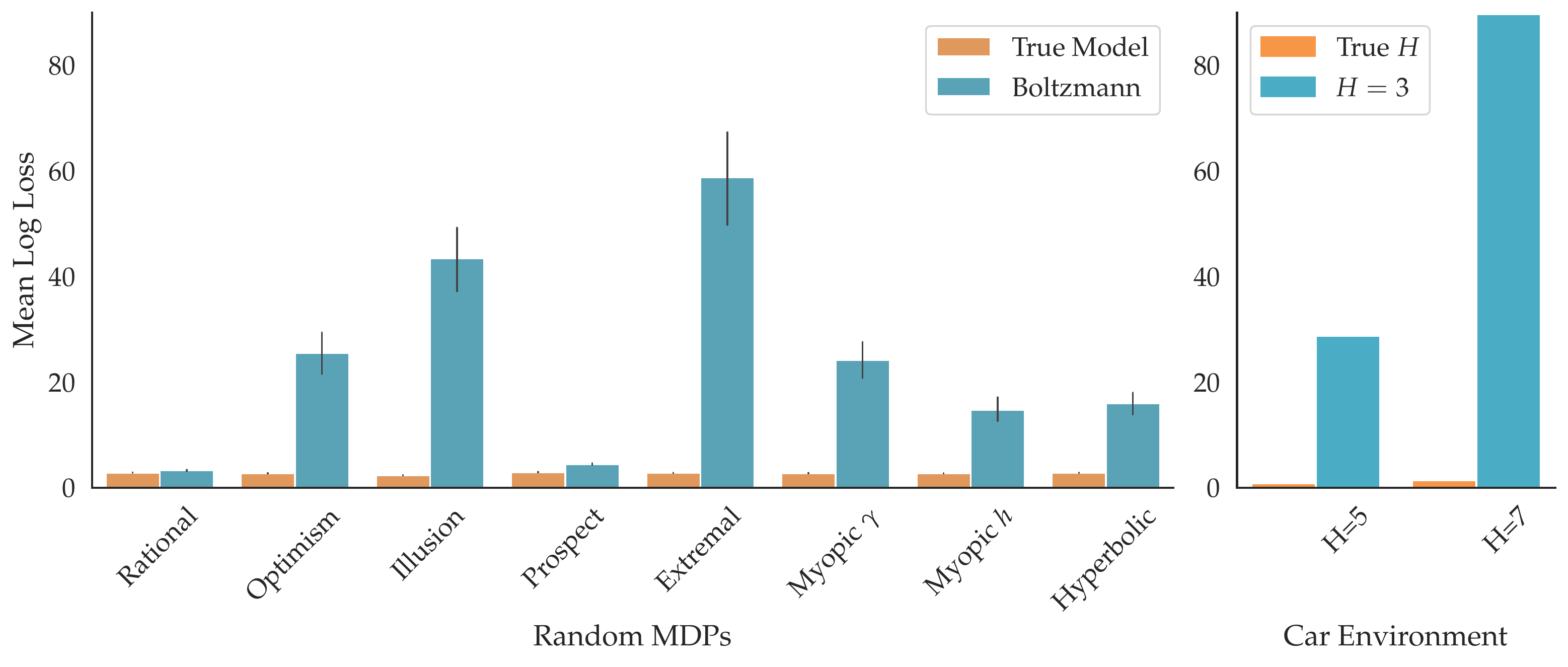}
    \caption{}
    \label{fig:misspec_bad}
    \end{subfigure}
    \hfill
    \begin{subfigure}[]{0.35\textwidth}
    \centering
    \includegraphics[width=0.8\columnwidth]{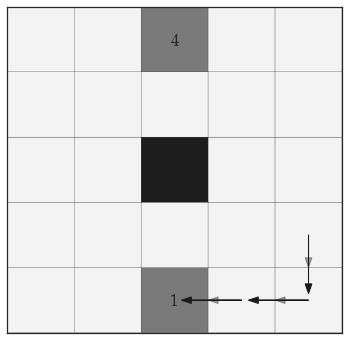}
    \caption{}
    \label{fig:bad_inference}
    \end{subfigure}
    \caption{(a) A comparison of reward inference using a correct model of the irrationality type, versus always using a Boltzmann-rational model ($\beta = 10$), on the random MDPs (left) and the car environment (right).  The impairment due to model misspecification greatly outweighs the variation in inference performance caused by various irrationalities. The error bars show the standard error of the mean, calculated by the bootstrap across environments. (b)
    An example of why assuming Boltzmann is bad when the ground truth \demonstrator{} is Myopic in the gridworld environment - the Boltzmann rational agent would take the trajectory depicted only if the reward at the bottom was not much less than the reward at the top. A myopic \demonstrator{} with $n \leq 4$, however, only "sees" the reward at the bottom. Consequently, inferring the preferences of the myopic agent as if it were Boltzmann leads to poor performance in this case. }
\end{figure*}
\begin{figure*}[]
    
\end{figure*}

We found similar results for CIOC: decreasing the MPC planning horizon increases the cosine similarity between the true and CIOC-recovered rewards. The return of optimizing the CIOC-recovered rewards is also higher when the \demonstrator{}'s planning horizon is 3 or 5 than 7. 

Overall, yet again we find that irrationality (in this case specifically myopia) improves reward inference by producing more diverse behavior as a function of the reward.

\subsection{Theoretical Analysis}
\label{theory}
We now investigate this phenomenon theoretically. We show that not only can rational behavior be arbitrarily less informative than irrational behavior, but also that this applies to Boltzmann-rationality.

\subsubsection{Informativeness as mutual information.}
The mutual information $\I(\rewardParam; \planner(\rewardParam)) = H(\rewardParam) - H(\rewardParam|\plannerParamed{\theta})$
between the policy and the reward parameters allows us to quantify the informativeness of a planner. As the conditional entropy $H(\rewardParam|\planner(\rewardParam)) = E\left [-\log p(\theta | d(\theta) \right ]$ is equal to the log loss of the posterior under the true model, the mutual information upper bounds how much better (in terms of log loss) \emph{any} inference procedure can do, relative to the prior. 
A known insight is that planners that optimize for maximizing the discounted sum of rewards \emph{are not the same} as those that optimize for being informative \citep{dragan2013legibility, hadfield2016cooperative}:
$\arg\max_\planner E_\rewardParam [V^{\plannerParamed{\rewardParam}}_\rewardParam] \not= \arg\max_{\planner} \I(\theta; \plannerParamed{\rewardParam})$.
{While this insight has been tested empirically, we begin with a theoretical lens for understanding it.}

\subsubsection{Irrationalities exist that are arbitrarily better for inference than rationality.}
We first consider \textbf{deterministic planners}: planners that return deterministic policies. 
We show that there are cases where the rational behavior is not informative at all, whereas some (irrational) deterministic planner achieves the theoretical upper bound on informativeness. 
\begin{prop}
There exists a family of URMDPs with state spaces of any size, such there exists a deterministic planner $\planner_{\textrm{Irrational}}$ satisfying $\I(\theta, d_{\textrm{Irrational}}(\theta)) = \log|\Theta|$ and $\I(\theta, d_{\textrm{Rational}}(\theta)) = 0$. 
\end{prop}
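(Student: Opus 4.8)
The plan is to construct, for each target state-space size, an explicit URMDP where the rational policy is forced to be reward-independent (so carries no information about $\theta$), while an irrational planner — concretely a myopic one, in the spirit of \figref{fig:front} and \figref{fig:myopic_vs_optimal} — produces a distinct deterministic policy for every $\theta \in \rewardParams$, hence is a bijection onto its image and attains $\I(\theta, \planner_{\textrm{Irrational}}(\theta)) = H(\theta) = \log|\rewardParams|$ (the last equality using that the prior on $\rewardParams$ can be taken uniform). The key design idea: make the environment so that from the start state there is a single ``gateway'' action whose immediate reward is $\theta$-independent but which the rational agent must take because it leads to a large delayed payoff available regardless of $\theta$; meanwhile, a second family of actions has immediate rewards that encode $\theta$ but whose long-run value is dominated by the gateway. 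A far-sighted (rational, high-$\gamma$) agent always picks the gateway, so its policy is constant in $\theta$; a myopic agent (small $\gamma$, or myopic value iteration with small $h$) instead picks the action with the largest immediate reward, which by construction is a different action for each $\theta$.

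Concretely, I would take $\rewardParams = \{\theta_1,\dots,\theta_k\}$ and build a ``hub-and-spoke'' MDP: a start state $s_{\mathrm{start}}$ with $k$ spoke-actions $a_1,\dots,a_k$ and one gateway action $a_0$. Action $a_0$ deterministically leads to an absorbing high-reward region whose discounted value, under parameter $\theta_j$, exceeds that of every spoke; make this region's reward a fixed positive constant under all $\theta_j$ (so the rational policy at $s_{\mathrm{start}}$ is always $a_0$, and downstream the policy is likewise $\theta$-independent since the region is trivial) — this gives $\planner_{\textrm{Rational}}(\theta_j)$ identical for all $j$, hence $\I = 0$. Action $a_i$ gives immediate reward $r_{\theta_j}(s_{\mathrm{start}},a_i,\cdot)$ equal to, say, $2$ if $i=j$ and $0$ otherwise, and then transitions to a zero-reward absorbing state. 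Then the myopic planner with horizon $h=1$ (or $\gamma$ small enough that one step of immediate reward outweighs the discounted gateway value) strictly prefers $a_j$ under $\theta_j$, so $\planner_{\textrm{Irrational}}(\theta_j) = a_j$, all distinct. To scale the state space to ``any size'', pad with extra unreachable or reward-irrelevant states; the argument is unaffected.

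The routine steps are: (i) verify the value-iteration fixed point to confirm $a_0$ is the rational choice at $s_{\mathrm{start}}$ for all $\theta_j$ — this is a one-line comparison of a geometric-series value against the bounded spoke values, controlled by choosing $\gamma$ close enough to $1$ and the gateway constant large enough; (ii) verify the myopic planner picks the reward-encoding spoke — immediate from the $h=1$ (or small-$\gamma$) update; (iii) compute the two mutual informations — the rational side is $0$ because $\planner_{\textrm{Rational}}(\theta)$ is a constant random variable, and the irrational side is $H(\theta)=\log|\rewardParams|$ because $\theta \mapsto \planner_{\textrm{Irrational}}(\theta)$ is injective, so $H(\theta \mid \planner_{\textrm{Irrational}}(\theta)) = 0$. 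The main obstacle — really the only delicate point — is making the two regimes coexist in one fixed MDP: the same $\gamma$ and transition structure must make the gateway dominate for the rational (full-horizon) planner yet be irrelevant for the myopic one. This is handled by separating scales — a modest per-step spoke reward ($2$) versus a gateway whose \emph{discounted total} is large (e.g. a constant $R$ with $\gamma R/(1-\gamma) \gg 2$) but whose \emph{one-step} contribution is small (e.g. $\gamma R < 2$, or just $0$ immediate reward on the $a_0$ transition itself) — so the horizon at which the agent plans is exactly the knob that flips the preference. I would also remark that essentially the same construction, with $\planner_{\textrm{Irrational}}$ replaced by a Boltzmann planner at high $\beta$, and a perturbation breaking the spoke-reward ties, handles the companion claim about Boltzmann-rationality stated in \secref{theory}.
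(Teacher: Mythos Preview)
Your construction is correct and proves the proposition, but it takes a genuinely different route from the paper's own proof. The paper's argument is more bare-bones: it sets $r_\theta(s,a,s') = \theta$ when $a = a^*$ and $0$ otherwise (for positive $\theta$), so the rational planner always returns the constant policy $\pi(s) = a^*$, giving zero mutual information; for the irrational planner it simply fixes an arbitrary enumeration of deterministic policies and defines $\planner_{\textrm{Irrational}}(\theta) = \pi^{(\theta)}$, an explicit bijection. Your construction instead builds a hub-and-spoke MDP in which a \emph{specific} irrationality from the paper's own taxonomy --- myopia (small $h$ or small $\gamma$) --- realizes the injective map, by making the reward-encoding spokes dominate at short horizons while a $\theta$-independent gateway dominates at long horizons. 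The paper's version is shorter and immediately yields the sharper appendix bound $\min(\log|\rewardParams|, |\states|\log|\actions|)$; your version is more work but is arguably more faithful to the paper's thesis, since the planner you exhibit is not a contrived enumeration but one of the concrete biases studied empirically (and indeed mirrors the intuition in \figref{fig:front} and \figref{fig:myopic_vs_optimal}). One small point worth making explicit in your write-up: the rational policy must be $\theta$-independent at \emph{every} state, not just $s_{\mathrm{start}}$, so fix a $\theta$-independent tie-breaking rule at the absorbing/padding states.
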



\subsubsection{Boltzmann-rationality is (arbitrarily) more informative than full rationality.} Of course, the upper bound is attained by \emph{some} irrational planner. It demonstrates that an \demonstrator{} can perform better than rational when specifically optimizing for informativeness. But this is an artificial, contrived kind of irrationality. In fact, prior work that maximized informativeness did so by solving a \emph{more difficult} problem than rationality \citep{dragan2013legibility, hadfield2016cooperative}. Here, we provide evidence that Boltzmann-rationality, a standard model of stochastic choice, outperforms full rationality for reward inference. (Proof in appendix.)

\begin{prop}
There exists a family of one-state two-action MDPs, with arbitrarily large $|\Theta|$ such that $\I(\theta, d_{\textrm{Boltz}}(\theta)) = \log|\Theta|$ and $\I(\theta, d_{\textrm{Rational}}(\theta)) = 0$. 
\end{prop}

\section{Effects of misspecification on reward inference}
\label{misspec-effects}
We see that irrationalities sometimes hinder, but sometimes help reward inference. So far, the \inferer{} had access to the type (and degree of irrationality) during inference. Next, we ask how important it is to know this. Can we not bother with irrationality, make the default assumption (of Boltzmann-rationality), and run inference?

\subsection{Assuming noisy rationality can lead to very poor inference}\figref{fig:misspec_bad} suggests that the answer is no. We start by comparing inference with the true model on random MDPs versus with assuming the standard Boltzmann model as a default. The results are quite striking: not knowing the correct irrationality harms inference tremendously.\footnote{\citet{shah2019feasibility} proposed a way to model irrationality and analyzed its benefit over assuming Boltzmann; the benefit was very limited, which they attributed due to their deep learning model's brittleness compared to exact planning. Here, we compare to perfect modeling to analyze the headroom that modeling has.} We then confirm that misspecification greatly harms inference in the autonomous driving environment. In \figref{fig:bad_inference}, we provide an example of why assuming Boltzmann-rationality when the \demonstrator{} is Myopic leads to bad inference. 
This emphasizes the importance of understanding irrationality when doing reward inference going forward.

\subsection{Approximate models of irrationality might be enough} This finding is rather daunting, as perfect models of irrationality are very challenging to develop. But do they need to be perfect? Our final analysis suggests that the answer is no as well. In \figref{fig:additional_misspec_grid}, we report the log loss of inference with the correct type, but under misspecification of the parameter. Encouragingly, we find that in many cases, merely getting the type of the \demonstrator{} irrationality correct is actually sufficient to lead to much better inference than assuming Boltzmann rationality. Further, we also find evidence that the \inferer{} does not need to get the type exactly right either: as shown in \figref{fig:myopia_type_misspec}, if the \inferer{} accounts for the \demonstrator{}'s myopia, but gets the \emph{type} of the myopia wrong, this still leads to significantly better inference than assuming Boltzmann rationality. 
\label{additional-misspec}

\begin{figure*}[]
    \centering
    \includegraphics[width=0.8\textwidth]{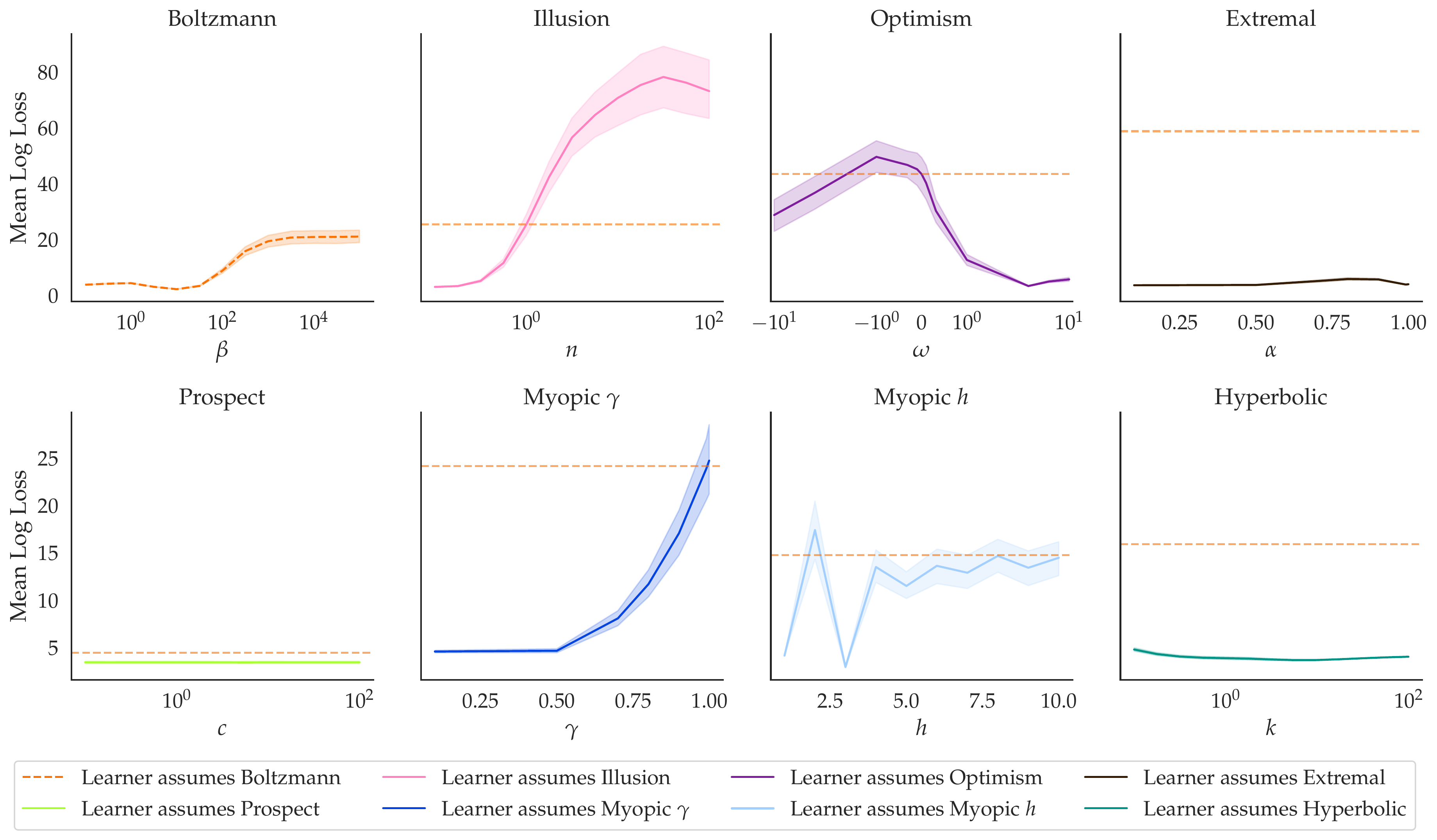}
    \caption{The log loss (lower = better) of various models under parameter misspecification. Each x-axis shows the parameter that the \inferer{} assumes. The orange line represents the performance when the \inferer{} makes the faulty assumption that the \demonstrator{} is Boltzmann-rational. In many cases, the \inferer{} perform better than by assuming Boltzmann-rational just by getting the type of the planner correct, even if they don't get the exact parameter correct. The error bars show the standard error of the mean, calculated by the bootstrap across environments.} 
    \label{fig:additional_misspec_grid}
\end{figure*}

\begin{figure}[]
    \centering
    \includegraphics[width=1.\columnwidth]{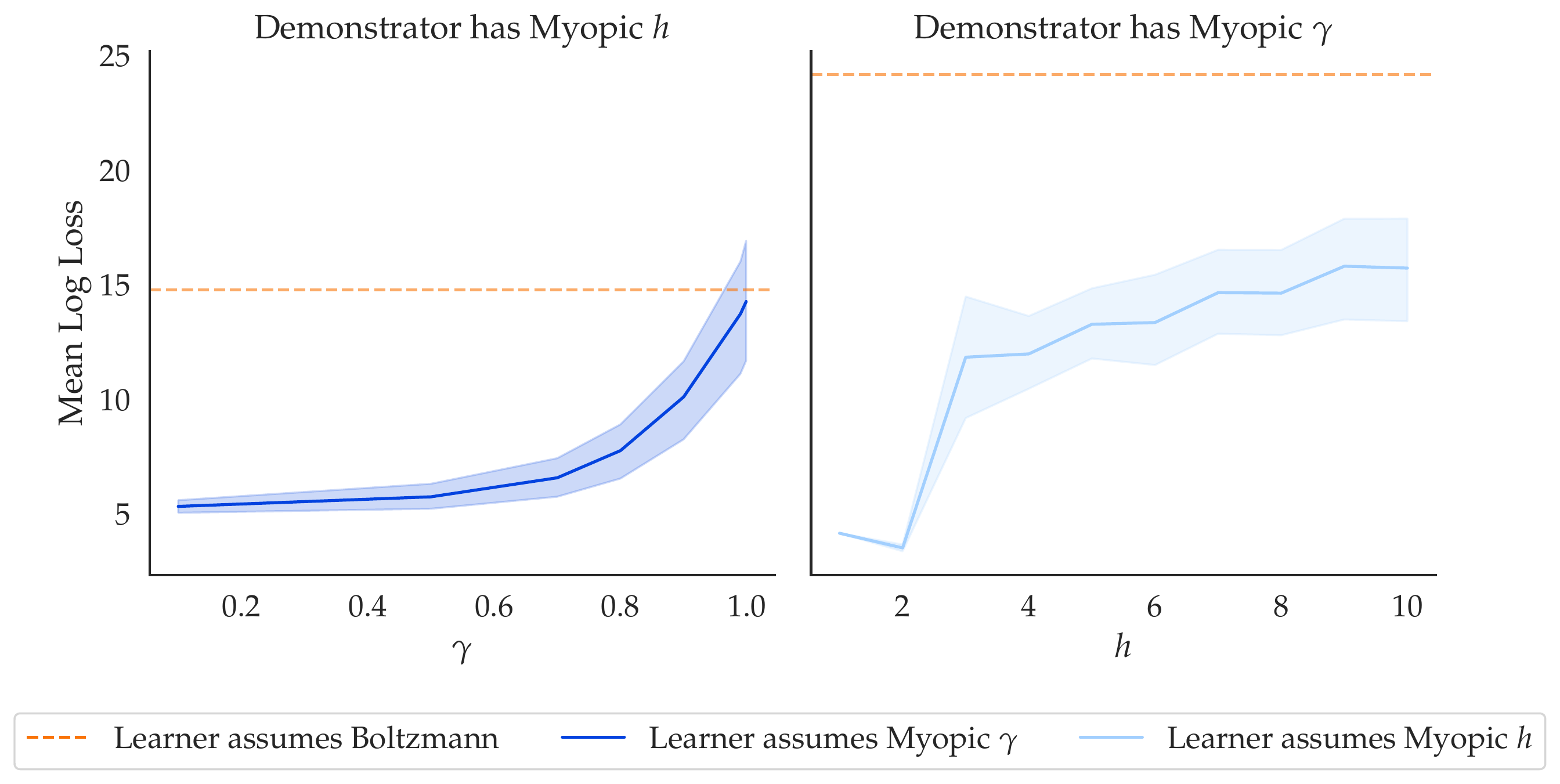}
    \caption{ The log loss (lower = better) of two myopic \demonstrator{}s under type misspecification. On the left, the \demonstrator{} performs myopic value iteration (Myopic $h$), but the \inferer{} assumes the \demonstrator{} has a myopic discount rate $\gamma$ (Myopic $\gamma$). On the right, the \demonstrator{} has a myopic discount rate $\gamma$ but the \inferer{} assumes myopic value iteration. However, in both cases, this leads to better inference than assuming Boltzmann-rationality.} 
    \label{fig:myopia_type_misspec}
\end{figure}

\section{Discussion}

\subsubsection{Summary}
In this work, we analyzed how the \emph{"ground truth"} human pairs with the robot's \emph{model} of the human to affect the performance of reward inference. To investigate this, we defined irrationalities as deviations from the Bellman update (\secref{irrationality-types}). 

On the \emph{model} side, we found that if the human is irrational, incorrectly modeling them via the standard noisy-rationality assumption can lead to so poor performance of reward inference, that it could be better to stick with the prior and skip inference. While it is natural for incorrect models to lead to worse performance, this gap is particularly daunting. Fortunately, 
 even approximate models of irrationalities may be enough - merely getting the \emph{type} of the \demonstrator{} irrationality correct led to significantly better inference than assuming noisy-rationality. 

On the \emph{ground truth} human side, we asked what the best-case performance of learning is for different human irrationalities -- that is, if the \inferer{} models the human correctly, how well can it hope to infer the reward? We expected that noisy and irrational humans would be less informative about what they want than perfectly rational humans, and therefore make the reward inference problem inherently more difficult. However, we found that irrationality can actually make human behavior \emph{more} informative, by increasing the mutual information between reward and policy. In three domains, we found that irrational behavior varies more as a function of the reward, and ultimately results in better learning performance. This is not only a theoretically interesting observation about the upper bound of the problem, but also encourages further research on modeling human irrationality, so that \inferer{}s can benefit from this additional information in human behavior.


\subsubsection{Limitations and Future Work}
Much of our results are in the relatively random MDP and gridworld environments. Though we confirmed our key results in a higher-dimensional, continuous autonomous driving task, future work is needed to replicate these results in more realistic domains. Future work is also needed to develop reward learning algorithms that can account for a variety human biases in larger domains.  

Further, our paper presents an analysis, not a practical algorithm for improving reward learning. Our most interesting finding, and the one we dedicate most of the paper to, is the fact that irrationalities can improve reward inference. Though we showed that even approximate models of irrationality suffice to outperform the standard noisily-rational assumption, actually benefiting from irrationality still requires precise models of the \demonstrator{}'s irrationality.  Nonetheless, we think the result is still important, because it encourages treating biases more seriously. In the end, our paper makes a perhaps fortunate observation -- that the irrational human, when correctly modelled, is better than the rational human correctly: since real humans are likely not perfectly rational, this might be good news in the end. In fact, is is probably easier to learn irrationality models of people, than to make people act rationally -- so in the end, it is a good thing that irrational behavior is the more informative one.

What is more, our results also point to an untapped opportunity: training people to act irrationality (e.g. myopically) when demonstrating behavior in order to be more informative to our robots. User studies with real people will be important to understand how to make use of this opportunity.

\newpage

\bibliography{references.bib}

\begin{thebibliography}{36}
\providecommand{\natexlab}[1]{#1}
\providecommand{\url}[1]{\texttt{#1}}
\expandafter\ifx\csname urlstyle\endcsname\relax
  \providecommand{\doi}[1]{doi: #1}\else
  \providecommand{\doi}{doi: \begingroup \urlstyle{rm}\Url}\fi

\bibitem[Abbeel and Ng(2004)]{abbeel2004apprenticeship}
Pieter Abbeel and Andrew~Y Ng.
\newblock Apprenticeship learning via inverse reinforcement learning.
\newblock In \emph{Proceedings of the twenty-first international conference on
  Machine learning}, page~1. ACM, 2004.
\newblock URL:
  \url{http://people.eecs.berkeley.edu/~russell/classes/cs294/s11/readings/Abbeel+Ng:2004.pdf}.

\bibitem[Alexander and Brown(2010)]{alexander2010hyperbolically}
William~H Alexander and Joshua~W Brown.
\newblock Hyperbolically discounted temporal difference learning.
\newblock \emph{Neural computation}, 22\penalty0 (6):\penalty0 1511--1527,
  2010.

\bibitem[Asadi and Littman(2017)]{asadi2017alternative}
Kavosh Asadi and Michael~L Littman.
\newblock An alternative softmax operator for reinforcement learning.
\newblock In \emph{Proceedings of the 34th International Conference on Machine
  Learning-Volume 70}, pages 243--252. JMLR. org, 2017.

\bibitem[Bagnell et~al.(2001)Bagnell, Ng, and Schneider]{bagnell2001solving}
J~Andrew Bagnell, Andrew~Y Ng, and Jeff~G Schneider.
\newblock Solving uncertain markov decision processes.
\newblock 2001.

\bibitem[Brockman et~al.(2016)Brockman, Cheung, Pettersson, Schneider,
  Schulman, Tang, and Zaremba]{brockman2016openai}
Greg Brockman, Vicki Cheung, Ludwig Pettersson, Jonas Schneider, John Schulman,
  Jie Tang, and Wojciech Zaremba.
\newblock Openai gym.(2016).
\newblock \emph{arXiv preprint arXiv:1606.01540}, 2016.

\bibitem[Chipman(2014)]{Chipman2014Cognitive}
Susan E.~F. Chipman.
\newblock \emph{The Oxford Handbook of Cognitive Science}.
\newblock Oxford University Press, 11 2014.
\newblock ISBN 9780199842193.
\newblock URL
  \url{https://www.oxfordhandbooks.com/view/10.1093/oxfordhb/9780199842193.001.0001/oxfordhb-9780199842193}.

\bibitem[Cully et~al.(2015)Cully, Clune, Tarapore, and Mouret]{cully2015robots}
Antoine Cully, Jeff Clune, Danesh Tarapore, and Jean-Baptiste Mouret.
\newblock Robots that can adapt like animals.
\newblock \emph{Nature}, 521\penalty0 (7553):\penalty0 503--507, 2015.

\bibitem[Desai(2017)]{desai2017uncertain}
Nishant Desai.
\newblock Uncertain reward-transition mdps for negotiable reinforcement
  learning.
\newblock Technical report, UC Berkeley, Berkeley, California, USA, 2017.

\bibitem[Devin et~al.(2017)Devin, Gupta, Darrell, Abbeel, and
  Levine]{devin2017learning}
Coline Devin, Abhishek Gupta, Trevor Darrell, Pieter Abbeel, and Sergey Levine.
\newblock Learning modular neural network policies for multi-task and
  multi-robot transfer.
\newblock In \emph{2017 IEEE International Conference on Robotics and
  Automation (ICRA)}, pages 2169--2176. IEEE, 2017.

\bibitem[Do et~al.(2008)Do, Rupert, and Wolford]{do2008evaluations}
Amy~M Do, Alexander~V Rupert, and George Wolford.
\newblock Evaluations of pleasurable experiences: The peak-end rule.
\newblock \emph{Psychonomic Bulletin \& Review}, 15\penalty0 (1):\penalty0
  96--98, 2008.

\bibitem[Dragan et~al.(2013)Dragan, Lee, and Srinivasa]{dragan2013legibility}
Anca~D Dragan, Kenton~CT Lee, and Siddhartha~S Srinivasa.
\newblock Legibility and predictability of robot motion.
\newblock In \emph{Proceedings of the 8th ACM/IEEE international conference on
  Human-robot interaction}, pages 301--308. IEEE Press, 2013.

\bibitem[Evans et~al.(2016)Evans, Stuhlm{\"u}ller, and
  Goodman]{evans2016learning}
Owain Evans, Andreas Stuhlm{\"u}ller, and Noah Goodman.
\newblock Learning the preferences of ignorant, inconsistent agents.
\newblock In \emph{Thirtieth AAAI Conference on Artificial Intelligence}, 2016.
\newblock URL:
  \url{https://pdfs.semanticscholar.org/31bf/e42e77a572bd83c0529e0f03bc3dc8af52c2.pdf}.

\bibitem[Fisac et~al.(2017)Fisac, Gates, Hamrick, Liu, Hadfield-Menell,
  Palaniappan, Malik, Sastry, Griffiths, and Dragan]{fisac2017pragmatic}
Jaime~F Fisac, Monica~A Gates, Jessica~B Hamrick, Chang Liu, Dylan
  Hadfield-Menell, Malayandi Palaniappan, Dhruv Malik, S~Shankar Sastry,
  Thomas~L Griffiths, and Anca~D Dragan.
\newblock Pragmatic-pedagogic value alignment.
\newblock \emph{arXiv preprint arXiv:1707.06354}, 2017.
\newblock URL: \url{https://arxiv.org/pdf/1707.06354.pdf}.

\bibitem[Fu et~al.(2018)Fu, Luo, and Levine]{fu2018learning}
Justin Fu, Katie Luo, and Sergey Levine.
\newblock Learning robust rewards with adverserial inverse reinforcement
  learning.
\newblock In \emph{International Conference on Learning Representations}, 2018.
\newblock URL \url{https://openreview.net/forum?id=rkHywl-A-}.

\bibitem[Gr{\"u}ne-Yanoff(2015)]{grune2015models}
Till Gr{\"u}ne-Yanoff.
\newblock Models of temporal discounting 1937--2000: An interdisciplinary
  exchange between economics and psychology.
\newblock \emph{Science in context}, 28\penalty0 (4):\penalty0 675--713, 2015.

\bibitem[Hadfield-Menell et~al.(2016)Hadfield-Menell, Russell, Abbeel, and
  Dragan]{hadfield2016cooperative}
Dylan Hadfield-Menell, Stuart~J Russell, Pieter Abbeel, and Anca Dragan.
\newblock Cooperative inverse reinforcement learning.
\newblock In \emph{Advances in neural information processing systems}, pages
  3909--3917, 2016.

\bibitem[Jameson and Kreindler(1973)]{jameson1973inverse}
Antony Jameson and Eliezer Kreindler.
\newblock Inverse problem of linear optimal control.
\newblock \emph{SIAM Journal on Control}, 11\penalty0 (1):\penalty0 1--19,
  1973.
\newblock URL: \url{https://epubs.siam.org/doi/pdf/10.1137/0311001}.

\bibitem[Jing et~al.(2019)Jing, Ma, Huang, Sun, and Liu]{jing2019task}
Mingxuan Jing, Xiaojian Ma, Wenbing Huang, Fuchun Sun, and Huaping Liu.
\newblock Task transfer by preference-based cost learning.
\newblock In \emph{Proceedings of the AAAI Conference on Artificial
  Intelligence}, volume~33, pages 2471--2478, 2019.

\bibitem[Kahneman and Tversky(2013)]{kahneman2013prospect}
Daniel Kahneman and Amos Tversky.
\newblock Prospect theory: An analysis of decision under risk.
\newblock In \emph{Handbook of the fundamentals of financial decision making:
  Part I}, pages 99--127. World Scientific, 2013.

\bibitem[Kalman(1964)]{kalman1964linear}
Rudolf~Emil Kalman.
\newblock When is a linear control system optimal?
\newblock \emph{Journal of Basic Engineering}, 86\penalty0 (1):\penalty0
  51--60, 1964.
\newblock URL:
  \url{https://asmedigitalcollection.asme.org/fluidsengineering/article-abstract/86/1/51/392203}.

\bibitem[Lehman et~al.(2018)Lehman, Clune, Misevic, Adami, Altenberg, Beaulieu,
  Bentley, Bernard, Beslon, Bryson, et~al.]{lehman2018surprising}
Joel Lehman, Jeff Clune, Dusan Misevic, Christoph Adami, Lee Altenberg, Julie
  Beaulieu, Peter~J Bentley, Samuel Bernard, Guillaume Beslon, David~M Bryson,
  et~al.
\newblock The surprising creativity of digital evolution: A collection of
  anecdotes from the evolutionary computation and artificial life research
  communities.
\newblock \emph{arXiv preprint arXiv:1803.03453}, 2018.

\bibitem[Levine and Koltun(2012)]{levine2012continuous}
Sergey Levine and Vladlen Koltun.
\newblock Continuous inverse optimal control with locally optimal examples.
\newblock \emph{arXiv preprint arXiv:1206.4617}, 2012.

\bibitem[Mombaur et~al.(2010)Mombaur, Truong, and Laumond]{mombaur2010human}
Katja Mombaur, Anh Truong, and Jean-Paul Laumond.
\newblock From human to humanoid locomotion—an inverse optimal control
  approach.
\newblock \emph{Autonomous robots}, 28\penalty0 (3):\penalty0 369--383, 2010.
\newblock URL:
  \url{https://link.springer.com/content/pdf/10.1007/s10514-009-9170-7.pdf}.

\bibitem[Ng et~al.(2000)Ng, Russell, et~al.]{ng2000algorithms}
Andrew~Y Ng, Stuart~J Russell, et~al.
\newblock Algorithms for inverse reinforcement learning.
\newblock In \emph{Icml}, volume~1, page~2, 2000.
\newblock URL: \url{http://ai.stanford.edu/~ang/papers/icml00-irl.pdf}.

\bibitem[Ramachandran and Amir(2007)]{ramachandran2007bayesian}
Deepak Ramachandran and Eyal Amir.
\newblock Bayesian inverse reinforcement learning.
\newblock In \emph{IJCAI}, volume~7, pages 2586--2591, 2007.

\bibitem[Reddy et~al.(2018)Reddy, Dragan, and Levine]{reddy2018you}
Sid Reddy, Anca Dragan, and Sergey Levine.
\newblock Where do you think you're going?: Inferring beliefs about dynamics
  from behavior.
\newblock In \emph{Advances in Neural Information Processing Systems}, pages
  1454--1465, 2018.

\bibitem[Regan and Boutilier(2011)]{regan2011robust}
Kevin Regan and Craig Boutilier.
\newblock Robust online optimization of reward-uncertain mdps.
\newblock In \emph{Twenty-Second International Joint Conference on Artificial
  Intelligence}, 2011.

\bibitem[Sadigh et~al.(2016)Sadigh, Sastry, Seshia, and
  Dragan]{sadigh2016planning}
Dorsa Sadigh, Shankar Sastry, Sanjit~A Seshia, and Anca~D Dragan.
\newblock Planning for autonomous cars that leverage effects on human actions.
\newblock In \emph{Robotics: Science and Systems}, volume~2. Ann Arbor, MI,
  USA, 2016.

\bibitem[Sadigh et~al.(2017)Sadigh, Dragan, Sastry, and
  Seshia]{dorsa2017active}
Dorsa Sadigh, Anca~D. Dragan, Shankar Sastry, and Sanjit~A Seshia.
\newblock Active preference-based learning of reward functions.
\newblock In \emph{Robotics: Science and Systems (RSS)}, 2017.

\bibitem[Shah et~al.(2019)Shah, Gundotra, Abbeel, and
  Dragan]{shah2019feasibility}
Rohin Shah, Noah Gundotra, Pieter Abbeel, and Anca~D Dragan.
\newblock On the feasibility of learning, rather than assuming, human biases
  for reward inference.
\newblock \emph{arXiv preprint arXiv:1906.09624}, 2019.
\newblock URL: \url{https://arxiv.org/pdf/1906.09624.pdf}.

\bibitem[Sharot et~al.(2007)Sharot, Riccardi, Raio, and
  Phelps]{sharot2007neural}
Tali Sharot, Alison~M Riccardi, Candace~M Raio, and Elizabeth~A Phelps.
\newblock Neural mechanisms mediating optimism bias.
\newblock \emph{Nature}, 450\penalty0 (7166):\penalty0 102--105, 2007.

\bibitem[Singh et~al.(2017)Singh, Lacotte, Majumdar, and Pavone]{singh2017risk}
Sumeet Singh, Jonathan Lacotte, Anirudha Majumdar, and Marco Pavone.
\newblock Risk-sensitive inverse reinforcement learning via semi-and
  non-parametric methods.
\newblock \emph{arXiv preprint arXiv:1711.10055}, 2017.

\bibitem[Taylor and Stone(2009)]{taylor2009transfer}
Matthew~E Taylor and Peter Stone.
\newblock Transfer learning for reinforcement learning domains: A survey.
\newblock \emph{Journal of Machine Learning Research}, 10\penalty0
  (Jul):\penalty0 1633--1685, 2009.

\bibitem[Thompson(1999)]{thompson1999illusions}
Suzanne~C Thompson.
\newblock Illusions of control: How we overestimate our personal influence.
\newblock \emph{Current Directions in Psychological Science}, 8\penalty0
  (6):\penalty0 187--190, 1999.

\bibitem[Ziebart et~al.(2008)Ziebart, Maas, Bagnell, and
  Dey]{ziebart2008maximum}
Brian~D Ziebart, Andrew~L Maas, J~Andrew Bagnell, and Anind~K Dey.
\newblock Maximum entropy inverse reinforcement learning.
\newblock In \emph{Aaai}, volume~8, pages 1433--1438. Chicago, IL, USA, 2008.

\bibitem[Ziebart et~al.(2010)Ziebart, Bagnell, and Dey]{ziebart2010modeling}
Brian~D Ziebart, J~Andrew Bagnell, and Anind~K Dey.
\newblock Modeling interaction via the principle of maximum causal entropy.
\newblock 2010.

\end{thebibliography}
\bibliographystyle{plainnat}

\newpage

\appendixpage
\section{Additional Details for Random MDPs}
\label{full-randomMDPs}
To calculate the policies corresponding to particular planners with a particular $\theta$, we perform value iteration using the Bellman equation until convergence (defined as $||V_{i+1} - V_{i}||_\infty < 0.001$). 

\section{Details for Gridworld}
\label{full-gridworld}
In this section, we investigate the effects of irrationality on inference in an MDP based on OpenAI Gym's `Frozen-Lake-v0' \citep{brockman2016openai}. This a small 5x5 gridworld (\figref{fig:myopic_vs_optimal}), consisting of three types of cells: ice, holes, and rewards. The \demonstrator{} can start in any ice cell. At each ice cell, the \demonstrator{} can move in one of the four cardinal directions. With probability $0.8$, they will go in that direction. With probability $0.2$, they will instead go in one of the two adjacent directions. Holes and rewards are terminal states, and return the \demonstrator{} back to their start state. They receive a penalty of $-10$ for falling into a hole and $\theta_i \in [0,4]$ ($|\Theta| = 25$) for entering into the $i$th reward cell. As with the random MDPs, to calculate the policies corresponding to particular planners, we perform value iteration using the Bellman equation until convergence (defined as $||V_{t+1} - V_{t}||_\infty < 0.001$)

To visualize why inference quality is improved,  we show examples of how the policy of several irrational \demonstrator{}s differ on the gridworld when the rational \demonstrator{}'s policies are identical in \figref{fig:optimism_vs_opt}, \figref{fig:myopic_vs_optimal}, and  \figref{fig:boltzmann_vs_optimal}. 

Finally, \figref{fig:bad_inference} example of why using the wrong model for reward inference leads to bad inference. In it, the reward inference algorithm assumes that the \demonstrator{} in Boltzmann when it is actually Myopic. The Boltzmann rational agent would take this trajectory only if the reward at the bottom was not much less than the reward at the top. The myopic agent with $n \leq 4$, however, only "sees" the reward at the bottom. Consequently, inferring the preferences of the myopic agent as if it were Boltzmann leads to poor performance in this case.

\begin{figure}[t]
    \centering
    \includegraphics[width=0.8\columnwidth]{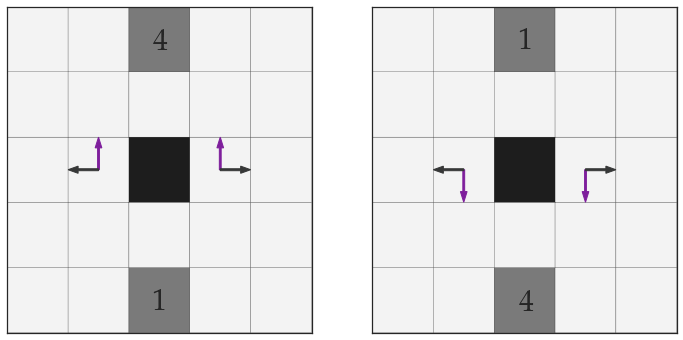}
   
    \caption{Optimism bias ($\omega = 3.16$) produces different actions for $\theta^*=(4,1)$ vs. $\theta^*=(1,4)$ in the states shown: the rational policy is to go away from the hole regardless of $\theta$, but an optimistic \demonstrator{} takes the chance and goes for the larger reward -- up in the first case, down in the second.} \label{fig:optimism_vs_opt}
    \end{figure}

\begin{figure}
    \centering
    \includegraphics[width=0.8\columnwidth]{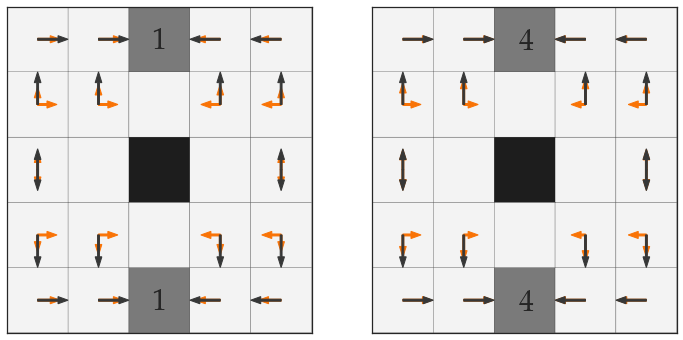}
    \caption{Boltzmann ($\beta=100$) produces different policies for $\theta^*=(1,1)$ vs. $\theta^*=(4,4)$: when $||\theta||$ is larger, the policy becomes closer to that of the rational \demonstrator{}, as the differences in $Q$-values becomes larger. }
    \label{fig:boltzmann_vs_optimal}
\end{figure}

\section{Details for Autonomous Driving Environment}
\label{full-cars}

Even if known irrationality helps reward inference both empirically and theoretically in small MDPs, the question still remains whether this result will matter in practice. As a result, we investigate the effect of \demonstrator{} irrationality on reward inference in the self-driving car domain \citep{sadigh2016planning}.

\noindent \textbf{Simulation environment.} As in previous work in the car domain \citep{sadigh2016planning, dorsa2017active}, we model the dynamics of cars using a point-mass model. The state of each car is a 4-dimensional vector $\state = [ x ~ y ~h ~v]$, where $x, y$ are the coordinates of the car, $h$ is the heading, and $v$ is the speed. The control input for the car is a two dimensional vector $\controls = [u_1 ~ u_2]$, where $u_1$ is the steering input and $u_2$ is the acceleration. We also include a friction coefficient $\alpha$. The dynamics model of the vehicle is:
\[[\dot{x} ~ \dot{y} ~\dot{h} ~\dot{v}] = [v\cdot\cos{h} ~~~~ v \cdot\sin{h} ~~~~ v \cdot u_1 ~~~~u_2 - \alpha\cdot v]. \] 

For ease of simulation, we discretized the simulation along the time dimension using the following dynamics:
\begin{align*}
    \Delta{\state}_t &= [\Delta{x}_{t} ~ \Delta{y}_{t} ~\Delta{h}_{t} ~\Delta{v}_{t}] \\
    &= [\bar v_t \cos{h_t} dt ~~ \bar v_t \sin{h_t} dt ~~ \bar v_t  u_1  dt ~~ (u_2 - \alpha v_t)  dt],
\end{align*}
where $ \Delta{\state}_t = \state_{t+1} - \state_t $ and $\bar v_t = v_t + 0.5 \cdot u_2 \cdot dt$. In our experiments, we used $dt = 0.1s$.

Reward functions in this environment are assumed to be a linear combination of features:
\[r_{\theta}(s,a) = \theta^\top f(s,a), ~~f:\states \times \actions \rightarrow \R^n\]

Due to environment complexity, we can't solve for optimal trajectories in our environment directly. Instead, we suppose that the planner of is performing Model Predictive Control (MPC) at every iteration -- that is, it will plan a finite horizon sequence of actions to maximize its reward, execute the first action in the sequence, then replan. As a basic model of irrationality, we consider shorter planning horizons. We assume that the reward inference procedure knows the planninng horizon exactly. 

\paragraph{Bayesian IRL in the car environment} First, we consider the analogue of the results in \secref{exact-inference-results} in the car environment: what is the log-loss of the posterior on $\rewardParam$, given different planning horizons? Since the demonstrations used in this work are generated via trajectory optimization (and not an inherently stochastic process), using Bayesian IRL requires us to specify a ``fake" observation model $P(\xi|\theta)$. 

We use the following distribution, normal in feature space, for $P(\xi| \theta)$:
\[P(\xi|\theta) \propto e^{\left (E[f(\xi)] - E[f(\xi^*_{\theta}] \right )^T \Sigma_{\xi_{\theta}^*}^{-1} \left (E[f(\xi)] - E[f(\xi^*_{\theta}]\right )},\]
where $E[f(\xi^*_{\theta}]$ is the expectation of the features of the optimal trajectory $\xi^*_\theta$ for $r_\theta$.

\paragraph{Maximum likelihood IRL in cars}
In practice, full Bayesian inference is completely intractable for complicated domains such as cars. Instead, the state of the art for reward inference are approximate, maximum-likelihood estimate (MLE) based methods.To study the effects of irrationality on MLE-based inference methods, we also perform Levine and Koltun's Continuous IOC with Locally Optimal Examples (CIOC) \citep{levine2012continuous}, which uses a Boltzmann model of the \demonstrator{} with a second order Laplace approximation of the normalizing constant. Since our \demonstrator{}s are actually locally optimal, we rectify the bias induced by the Boltzmann model by using a large $\beta = 10^4$. To rectify the issue of local optimization in CIOC, we initialize the optimization procedure with the true reward weights. 

As CIOC returns an (approximate) MLE estimate for the reward parameter $\theta$, we cannot use the log-loss metric. Instead, we evaluate the reward functions by the \textbf{cosine similarity} of $\theta^*$ and $\hat \theta$, as well as the \textbf{normalized return} of the trajectory when optimizing the recovered $\hat \theta$. (We normalize the returns so that the optimal trajectory has return $1$ and the trajectory that goes forward at constant speed $0$.)

\noindent{\textbf{Driving Scenario: Merging}} Our experiments were performed in a simple merging environment (Fig. \ref{fig:front}). In it, the \demonstrator{} wants to merge into the right lane while trying to maintain its $1.2$ forward speed. In addition to the \demonstrator{} car, the right lane contains two constant velocity cars, traveling at $0.8$ speed. The features of this environment are composed of a squared penalty for deviating from $1.2$ forward speed, features for the squared distances to the medians of each of the lanes, a feature for the minimum squared distance to any of the medians of the lanes, and a smooth collision feature. 

For the Bayesian IRL scenario, we considered four different reward functions on this domain, consisting of varying the weight on the penalty for deviating from the target speed. All the other weights are unchanged. In particular, we considered $\theta_{\textrm{speed}} \in \{0.5, 1, 2, 4\}$. In the CIOC scenario, we used $\theta_{\textrm{speed}} = 1$. We also consider 3 different planning horizons: $h=3$, $h=5$, and $h=7$. This means we had 12 conditions for Bayesian IRL and 3 for CIOC. 

\section{Proofs for \secref{theory}}
First, note that for deterministic planners, there that there are $|\actions|^{|\states|}$ such policies, and thus we have $\I(\theta; d(\theta)) \leq H(d(\theta)) \leq |\states| \log |\actions|$ (as the entropy of a discrete random variable $X$ is bounded above by the logarithm of the size of its support $|X|$). Similarly, we also have that $\I(\theta; d(\theta)) \leq H(\rewardParam) \leq \log|\rewardParams|$. For deterministic planners, we thus have
\begin{align*}
    \I(\theta; d(\theta)) \leq \min(\log|\rewardParams|, |\states| \log |\actions|),
\end{align*}

We will now prove a stronger version of proposition 3. 
\begin{prop}
There exists a family of URMDPs with state and action spaces of any size, such that the rational planner provides no information, and there exists a deterministic planner that provides $min(\log |\rewardParams|, |\states| \log |\actions|)$ bits worth of information. 
\end{prop}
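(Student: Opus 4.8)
The plan is to build one family of URMDPs, indexed by the desired cardinalities $|S|$, $|A|$, $|\Theta|$, in which the rational policy is forced to be the same for every reward parameter — so that $\I(\theta; d_{\textrm{Rational}}(\theta)) = 0$ — while we retain enough freedom to hand-craft a deterministic planner whose policy encodes $\theta$ up to the information-theoretic ceiling $\min(\log|\Theta|,|S|\log|A|)$. Since the bound $\I(\theta; d(\theta)) \le \min(\log|\Theta|,|S|\log|A|)$ for deterministic planners is already established just above the statement, only achievability and the vanishing of the rational planner's information remain, and both fall out quickly once the construction is fixed.

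For the construction I would single out one action $a_1 \in A$ and make it strictly dominant everywhere, uniformly in $\theta$: set $r_\theta(s,a_1,s') = 1$ for all $s,s',\theta$, and $|r_\theta(s,a,s')| \le 1/2$ for every $a \ne a_1$ (the exact values of these small rewards are immaterial — they may even be chosen to depend on $\theta$). A one-line Bellman argument then gives $V^*_\theta(s) = 1/(1-\gamma)$ at every state and every $\theta$: the all-$a_1$ policy attains it, and no per-step reward exceeds $1$. Hence $Q^*_\theta(s,a_1) = 1/(1-\gamma) > 1/2 + \gamma/(1-\gamma) \ge Q^*_\theta(s,a)$ for every $a \ne a_1$, so for each $\theta$ the unique optimal deterministic policy is the all-$a_1$ policy. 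Thus $d_{\textrm{Rational}}$ is constant, $H(d_{\textrm{Rational}}(\theta)) = 0$, and $\I(\theta; d_{\textrm{Rational}}(\theta)) = 0$.

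For the informative planner, assume the interesting case $|\Theta| \ge 2$ and $|A| \ge 2$ (otherwise $\min(\log|\Theta|,|S|\log|A|) = 0$ and there is nothing to do), and set $N := \min(|\Theta|,|A|^{|S|})$. If $|\Theta| \le |A|^{|S|}$, let $d_{\textrm{Irrational}}$ map $\Theta$ bijectively onto $N$ distinct deterministic policies and take the $\Theta$-marginal of the prior uniform; if $|\Theta| > |A|^{|S|}$, partition $\Theta$ into $N$ nonempty blocks, let $d_{\textrm{Irrational}}$ be constant with distinct values on the blocks, and choose the prior so that its image under $d_{\textrm{Irrational}}$ is uniform over all $|A|^{|S|}$ deterministic policies. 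In both cases $d_{\textrm{Irrational}}$ is a legitimate deterministic planner; its image contains at least two policies, at most one of which can be the all-$a_1$ optimum, so it is suboptimal for some $\theta$ and hence genuinely irrational. Being a deterministic function of $\theta$ it satisfies $H(d_{\textrm{Irrational}}(\theta)\mid\theta) = 0$, so $\I(\theta; d_{\textrm{Irrational}}(\theta)) = H(d_{\textrm{Irrational}}(\theta)) = \log N = \min(\log|\Theta|,|S|\log|A|)$, meeting the upper bound.

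The construction is essentially the whole argument and the verifications are routine; the one place that needs a little care is the regime $|\Theta| > |A|^{|S|}$, where attaining exactly $|S|\log|A|$ bits requires the push-forward of the prior under $d_{\textrm{Irrational}}$ to be exactly uniform over the $|A|^{|S|}$ deterministic policies — which is why I keep the prior over $\Theta$, rather than the reward function, as the tunable ingredient of the family.
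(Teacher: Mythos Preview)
Your proof is correct and shares the paper's core idea: make one action strictly dominant so that the rational planner is constant (hence uninformative), then use an arbitrary deterministic encoding $\theta \mapsto \pi^{(\theta)}$ for the irrational planner. The main difference is that the paper simply fixes $|\Theta| = |A|^{|S|}$ with a uniform prior and reward $r_\theta(s,a^*,s') = \theta$, $r_\theta(s,a,s') = 0$ otherwise; this collapses the two terms in the $\min$ and lets a single injective enumeration of deterministic policies do all the work. You instead allow $|\Theta|$ to be arbitrary and split into the regimes $|\Theta| \le |A|^{|S|}$ and $|\Theta| > |A|^{|S|}$, tuning the prior in the second regime so that the push-forward over policies is uniform. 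Your version is strictly more general (it covers any $|\Theta|$, not just $|\Theta| = |A|^{|S|}$), at the modest cost of a case split and a non-uniform prior; the paper's version is shorter because the statement only demands arbitrary $|S|$ and $|A|$, so it is free to pick $|\Theta|$ conveniently.
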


\begin{proof}
Consider a set of environments where $\rewardParams = \{1, 2, ..., |\actions|^{|\states|}\}$, the prior $\prior$ is uniform over $\rewardParams$ (and thus $H(\theta) = \log |\rewardParams|$), and where
\begin{align*}
    r_\rewardParam(s, a, s') &= \begin{cases} \theta & a = a^* \\
    0 & a \not = a^*
    \end{cases}
\end{align*}
for some $a^*$. Then the unique optimal policy has $\pi(s) = a^*$ for every $\rewardParam$ (as $\theta > 0$). This implies that
\[\I(\theta, d_{\textrm{Rational}}(\theta)) \leq  H(d_{\textrm{Rational}}(\theta)) = 0.\]

However, the planner $d':\theta \mapsto \pi^{(\theta)}$, with some fixed some ordering $\{\pi^{(i)}\}_{i \in \{1, 2, ..., |\actions|^{|\states|}\}}$ of the possible policies, satisfies $H(\theta| d'(\theta)) = 0$ and thus $\I(\theta; d'(\theta)) = H(\theta) = \log |\rewardParams| = |\states| \log |\actions|$.
\end{proof}

By choosing $A$ such that $|A| \geq |\theta|^{1/|S|}$, we get the Proposition 3 in the text.  

\begin{prop}
There exists a family of one-state two-action MDPs, with arbitrarily large $|\Theta|$ such that $\I(\theta, d_{\textrm{Boltz}}(\theta)) = \log|\Theta|$ and $\I(\theta, d_{\textrm{Rational}}(\theta)) = 0$. 
\end{prop}
\begin{proof}
Suppose first action $a_1$ has reward $\theta$, where $\rewardParam \in \{1, 2, ..., -|\rewardParams|\}$, and the second action $a_2$ has reward $0$. Then the rational planner $d_{\textrm{Rational}}$ will always return a policy that always takes the first action, while $ d_{\textrm{Boltz}}(\theta)$ is the policy
\begin{align*}
    \pi_\theta(a_1|s_1) &= \frac{e^{\beta \left(\theta + V(s_1) \right)}}{e^{\beta\left(\theta + V(s_1)\right)} + e^{\beta V(s_1)}} = \frac{1}{1+e^{-\beta\theta}}\\
    \pi_\theta(a_2|s_1) &= \frac{1}{1+e^{\beta\theta}}
\end{align*}
This mapping is injective, and so $H(\theta| d_{\textrm{Boltz}}(\theta)) = 0$ and thus $\I(\theta; d_{\textrm{Boltz}}(\theta)) = H(\theta) = \log |\rewardParams|$.
\end{proof}

\section{More theory}
\paragraph{Why recover the reward parameter?}
Even if irrationalities can help with inference, a natural question why we wish to infer the reward. If the actual goal is acting optimally, a rational \demonstrator{} would be great because we can just imitate them. In addition to the fact that real people are not perfectly rational, the reason we focus on inference is that imitation is not enough in some cases, e.g. if we need to behave in new environments \citep{taylor2009transfer,devin2017learning, jing2019task}, or transfer the policy to a robot with different dynamics \citep{cully2015robots, fu2018learning, reddy2018you}.

In fact, there are cases where being robust to changes in dynamics requires identifying the reward parameters. Indeed, there exist such cases where it is impossible to identify the parameters from rational \demonstrator{}s, but possible from Boltzmann \demonstrator{}s:

\begin{prop}
There exists an URMDP $M$ and a set of new transition probabilities $\{\{\transProb^{(i)}\}\}_{i \in \{1, 2, ..., |\rewardParams|\}}$ such that $\log|\rewardParams|$ bits of information are needed to compute the optimal policy under all the transition probabilities; and where $\I(\theta, d_{\textrm{Rational}}(\theta)) = 0$, whereas $\I(\theta, d_{\textrm{Boltz}}(\theta)) = \log|\rewardParams|$.

\end{prop}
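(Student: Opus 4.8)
The plan is to splice together the one-state Boltzmann gadget from the proof of Proposition~5 with a ``dynamics-sensitivity'' gadget that is invisible under the original transitions but forces the optimal policy under the new transitions to depend on $\rewardParam$. Concretely, take $\rewardParams = \{1,\dots,N\}$ with $N = |\rewardParams|$ and a uniform prior, so $H(\rewardParam) = \log N$. Let the state space be $\{s_0\}\cup\{w_0, w_1,\dots,w_N\}$ with two actions $a_1, a_2$, where each $w_j$ is absorbing with zero reward. Under the \emph{original} transitions of $M$, both actions self-loop at $s_0$, with $r_\rewardParam(s_0,a_1,s_0) = \rewardParam$ and $r_\rewardParam(s_0,a_2,s_0) = 0$; all reward entries into the $w_j$ are fixed in the next step and are never triggered under the original dynamics. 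This is exactly the setting of Proposition~5: since $\rewardParam > 0$, the rational planner always plays $a_1$ at $s_0$, so $d_{\textrm{Rational}}(\rewardParam)$ is the same policy for every $\rewardParam$ (fixing an arbitrary but $\rewardParam$-independent tie-break on the irrelevant $w_j$ states), giving $\I(\rewardParam, d_{\textrm{Rational}}(\rewardParam)) = 0$; whereas the Boltzmann planner assigns $\policy_\rewardParam(a_1\mid s_0) = 1/(1+e^{-\beta\rewardParam})$, because the $\gamma V(s_0)$ terms cancel in the $Q$-gap $Q(s_0,a_1)-Q(s_0,a_2)=\rewardParam$. This map is strictly monotone, hence injective in $\rewardParam$, so $H(\rewardParam\mid d_{\textrm{Boltz}}(\rewardParam)) = 0$ and $\I(\rewardParam, d_{\textrm{Boltz}}(\rewardParam)) = \log N$ (using the upper bound $\I \le H(\rewardParam)$ recalled at the start of this appendix).

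Next I would define the $N$ alternative transition families. Under $\transProb^{(i)}$ the only change is at $s_0$: $a_1$ now leads deterministically to $w_0$ and $a_2$ leads deterministically to $w_i$. Set $r_\rewardParam(s_0,a_1,w_0) = \tfrac12$ for all $\rewardParam$, and $r_\rewardParam(s_0,a_2,w_i) = \mathbf{1}[\rewardParam = i]$. Then under $\transProb^{(i)}$ the $Q$-values at $s_0$ are $\tfrac12$ for $a_1$ and $\mathbf{1}[\rewardParam = i]$ for $a_2$ (the successor states are absorbing with zero continuation value, independently of $\gamma\in[0,1)$), so the optimal action at $s_0$ under $\transProb^{(i)}$ is $a_2$ exactly when $\rewardParam = i$, and $a_1$ otherwise. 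Hence the tuple of optimal policies $\big(d^*_{\transProb^{(1)}}(\rewardParam),\dots,d^*_{\transProb^{(N)}}(\rewardParam)\big)$ determines $\rewardParam$ — it is the unique index at which $a_2$ is chosen — so the map from $\rewardParam$ to this tuple is injective and $\log N$ bits of information about $\rewardParam$ are needed to compute the optimal policy under all the new transition probabilities. Since a rational demonstrator under the original dynamics conveys $0$ bits, these optimal policies cannot be recovered from rational demonstrations, while a Boltzmann demonstrator conveys all $\log N$ bits (pinning down $\rewardParam$ exactly), so they can.

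The one point requiring care — and the main obstacle — is the bookkeeping that makes a \emph{single} fixed reward function $r_\rewardParam$ serve all three roles simultaneously: keeping $d_{\textrm{Rational}}$ constant and $d_{\textrm{Boltz}}$ injective under the original dynamics, \emph{and} making the $\transProb^{(i)}$-optimal policies jointly injective. This is handled by routing the dynamics gadget through the states $w_0,\dots,w_N$, which are unreachable under the original transitions, so the reward entries $r_\rewardParam(s_0,a_k,w_j)$ used for the third role never affect the first two; and by noting that every relevant $Q$-value comparison is strict, so no $\gamma$- or tie-breaking boundary cases arise. (One could instead use only $O(\log N)$ alternative transition families with ``threshold'' rewards $\mathbf{1}[\rewardParam \le i]$ rather than one-hot rewards, but since the statement indexes the family by $\{1,\dots,|\rewardParams|\}$, the one-hot construction is the cleanest.)
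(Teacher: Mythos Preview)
Your construction is correct: the self-looping state $s_0$ reproduces the Boltzmann/rational separation exactly as in the preceding proposition, and routing the alternative dynamics through the unreachable sinks $w_0,\dots,w_N$ with one-hot rewards $r_\rewardParam(s_0,a_2,w_i)=\mathbf{1}[\rewardParam=i]$ cleanly forces the tuple of $\transProb^{(i)}$-optimal policies to be injective in $\rewardParam$. All $Q$-comparisons are strict, the tie-break at the $w_j$ is $\rewardParam$-independent, and the reward entries used by the dynamics gadget never interact with the original-dynamics computations, so the three requirements are satisfied simultaneously.

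The paper's proof takes a rather different route. It keeps the state space fixed at \emph{two} states (independent of $|\rewardParams|$) and instead builds the $\rewardParam$-sensitivity into \emph{stochastic} new transitions: under $\transProb^{(i)}$, action $a_1$ deterministically yields reward $0$ while $a_2$ reaches the ``good'' state with probability $\tfrac{2}{2i+1}$ and otherwise incurs a $-1$ penalty, so the expected one-step reward of $a_2$ is $\tfrac{2\rewardParam - (2i-1)}{2i+1}$, which is positive iff $\rewardParam\ge i$. This is a \emph{threshold} construction rather than your one-hot construction: the optimal action under $\transProb^{(i)}$ flips from $a_1$ to $a_2$ exactly at $\rewardParam=i$, and the collection of flip points pins down $\rewardParam$. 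The trade-off is clear: the paper gets a constant-size MDP at the price of carefully tuned stochastic transitions and a slightly more delicate inequality check, whereas your version inflates the state space to $|\rewardParams|+2$ but keeps everything deterministic and makes the ``$\rewardParam$ is readable from the policy tuple'' step a one-line observation. Both are valid instantiations of the existential claim; your parenthetical about an $O(\log|\rewardParams|)$ threshold variant is closer in spirit to the paper's argument, though the paper still uses $|\rewardParams|$ alternative families as stated.
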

\begin{proof} We construct a 2 state, 2 action URMDP.
Label the two states $\state_{1}$ and $\state_{2}$. Let $\rewardParams = \{1, 2, ..., |\rewardParams|\}$, the prior be uniform over $\rewardParams$, and let $r(s, a, s_1) = \theta$ for all $s, a$ and $r(s, a_1, s_2) = 0$, $r(s, a_2, s_2)=-1$ for all $s$.\\ 
Under the original transition probabilities  $\{\transProb\}$, $a_1$ leads deterministically to $s_1$ while  $a_2$ deterministically leads to $s_2$. Then, the unique optimal policy is $\pi(s) = a^*$ for every $\rewardParam$ (as $\theta > 0$). This implies that $\I(\theta, d_{\textrm{Rational}}(\theta))= 0$. On the other hand, $d_{\textrm{Boltz}}$ is an injective map (as in Proposition 2), and thus $\I(\theta; d_{\textrm{Boltz}}(\theta)) = \log |\rewardParams|$.  \\
Now, we construct $\{\transProb^{(i)}\}$ such that $a_1$ is optimal for $\theta \leq i-1$, and $a_2$ is optimal for $\theta \geq i$. First, let $a_1$ deterministically lead to $a_2$, thus, leading to reward 0. Next, let $a_2$ lead to $s_1$ with probability $\frac{2}{2i + 1}$. Therefore:
\begin{align*}
    E_{s'\sim P_{s, a_2}^{(i)}} [r_\theta(s, a_2, s')] &= \frac{2}{2i+1} \theta - \frac{2i-1}{2i + 1}
\end{align*}
Then for $\theta \geq i$, we have $E_{s'\sim P_{s, a_2}^{(i)}} [r_\theta(s, a_2, s')]\geq \frac{1}{2i + 1} > 0$ and for $\theta \leq i-1$, we have $E_{s'\sim P_{s, a_2}^{(i)}} [r_\theta(s, a_2, s')]\leq - \frac{1}{2i + 1} < 0$.\\
By construction, in order to compute the optimal policy for every new transition probabilities $\{\transProb^{(i)}\}$, we must know the value of $\theta$, and thus need $\log|\rewardParams|$ bits of information.  
\end{proof}

\end{document}